\newcommand\cA{\mathcal{A}}
\newcommand\cP{\mathcal{P}}
\newcommand\cQ{\mathcal{Q}}
\newcommand\cX{\mathcal{X}}
\newcommand\cD{\mathcal{D}}
\newcommand\E{\mathop{\mathbb{E}}}
\newcommand{\1}{\mathds{1}}
\newcommand{\boldx}{\boldsymbol{x}}
\newcommand{\pirv}{\Pi}
\newcommand{\pispace}{\boldsymbol{\Pi}}
\DeclareMathOperator*{\argmax}{arg\,max}
\newtheorem{theorem}{Theorem}[section]
\newtheorem{lemma}[theorem]{Lemma}
\newtheorem*{remark}{Remark}
\newtheorem{definition}{Definition}[section]
\newif\ifcomment
\newcommand{\ar}[1]{\textcolor{blue}{[AR: #1]}}
\newcommand{\ar}[1]{}
\title{A New Analysis of Differential Privacy's Generalization Guarantees}
\author{Christopher Jung\thanks{Supported in part by NSF grant AF-1763307} \and Katrina Ligett\thanks{Supported in part by the United States Air Force and DARPA under Contract No FA8750-16-C-0022.} \and Seth Neel\thanks{Supported in part by an NSF Graduate Research Fellowship} \and Aaron Roth\thanks{Supported in part by NSF grant AF-1763314, the United States Air Force and DARPA under Contract No FA8750-16-C-0022, and a grant from the Sloan Foundation.} \and Saeed Sharifi-Malvajerdi \and Moshe Shenfeld}
\begin{document}

\maketitle
\begin{abstract}
We give a new proof of the ``transfer theorem'' underlying adaptive data analysis: that any mechanism for answering adaptively chosen statistical queries that is differentially private and sample-accurate is also accurate out-of-sample. Our new proof is elementary and gives structural insights that we expect will be useful elsewhere. We show: 1) that differential privacy ensures that the expectation of any query on the \emph{posterior distribution} on datasets induced by the transcript of the interaction is close to its true value on the data distribution, and 2) sample accuracy on its own ensures that any query answer produced by the mechanism is close to its posterior expectation with high probability. This second claim follows from a thought experiment in which we imagine that the dataset is resampled from the posterior distribution after the mechanism has committed to its answers. The transfer theorem then follows by summing these two bounds, and in particular, avoids the ``monitor argument'' used to derive high probability bounds in prior work.

An upshot of our new proof technique is that the concrete bounds we obtain are substantially better than the best previously known bounds, even though the improvements are in the constants, rather than the asymptotics (which are known to be tight). As we show, our new bounds outperform the naive ``sample-splitting'' baseline at dramatically smaller dataset sizes compared to the previous state of the art, bringing techniques from this literature closer to practicality.
\end{abstract}
\thispagestyle{empty} \setcounter{page}{0}
\clearpage

\section{Introduction}
Many data analysis pipelines are \emph{adaptive}: the choice of which analysis to run next depends on the outcome of previous analyses. Common examples include variable selection for regression problems and hyper-parameter optimization in large-scale machine learning problems: in both cases, common practice involves repeatedly evaluating a series of models on the same dataset. Unfortunately, this kind of adaptive re-use of data invalidates many traditional methods of avoiding over-fitting and false discovery, and has been blamed in part for the recent flood of non-reproducible findings in the empirical sciences \citep{GL14}.

There is a simple way around this problem: don't re-use data. This idea suggests a baseline called \emph{data splitting}: to perform $k$ analyses on a dataset, randomly partition the dataset into $k$ disjoint parts, and perform each analysis on a fresh part. The standard ``holdout method'' is the special case of $k=2$. Unfortunately, this natural baseline makes poor use of data: in particular, the data requirements of this method grow \emph{linearly} with the number of analyses $k$ to be performed.

A recent literature starting with \citet{DFHPRR15a} shows how to give a significant asymptotic improvement over this baseline via a connection to differential privacy: rather than computing and reporting exact sample quantities, perturb these quantities with noise. This line of work established a powerful \emph{transfer theorem}, that informally says that any analysis that is simultaneously differentially private and accurate \emph{in-sample} will also be accurate \emph{out-of-sample}. The best analysis of this technique shows that for a broad class of analyses and a target accuracy goal, the data requirements grow only with $\sqrt{k}$ --- a quadratic improvement over the baseline \citep{BNSSSU16}. Moreover, it is known that in the worst case, this cannot be improved asymptotically \citep{HU14,SU15}. Unfortunately, thus far this literature has had little impact on practice. One major reason for this is that although the more sophisticated techniques from this literature give asymptotic improvements over the sample-splitting baseline, the concrete bounds do not actually improve on the baseline until the dataset is enormous. This remains true even after optimizing the constants that arise from the arguments of \citep{DFHPRR15a} or \citep{BNSSSU16}, and appears to be a fundamental limitation of their proof techniques \citep{RRSSTW19}. In this paper, we give a new proof of the transfer theorem connecting differential privacy and in-sample accuracy to out-of-sample accuracy. Our proof is based on a simple insight that arises from taking a Bayesian perspective, and in particular yields an improved concrete bound that beats the sample-splitting baseline at dramatically smaller data set sizes $n$ compared to prior work. In fact, at reasonable dataset sizes, the magnitude of the improvement arising from our new theorem is significantly larger than the improvement between the bounds of \citet{BNSSSU16} and \citet{DFHPRR15a}: see Figure \ref{fig:intro}.

\begin{figure}[h]
\label{fig:intro}
\begin{center}
\includegraphics[scale=0.2]{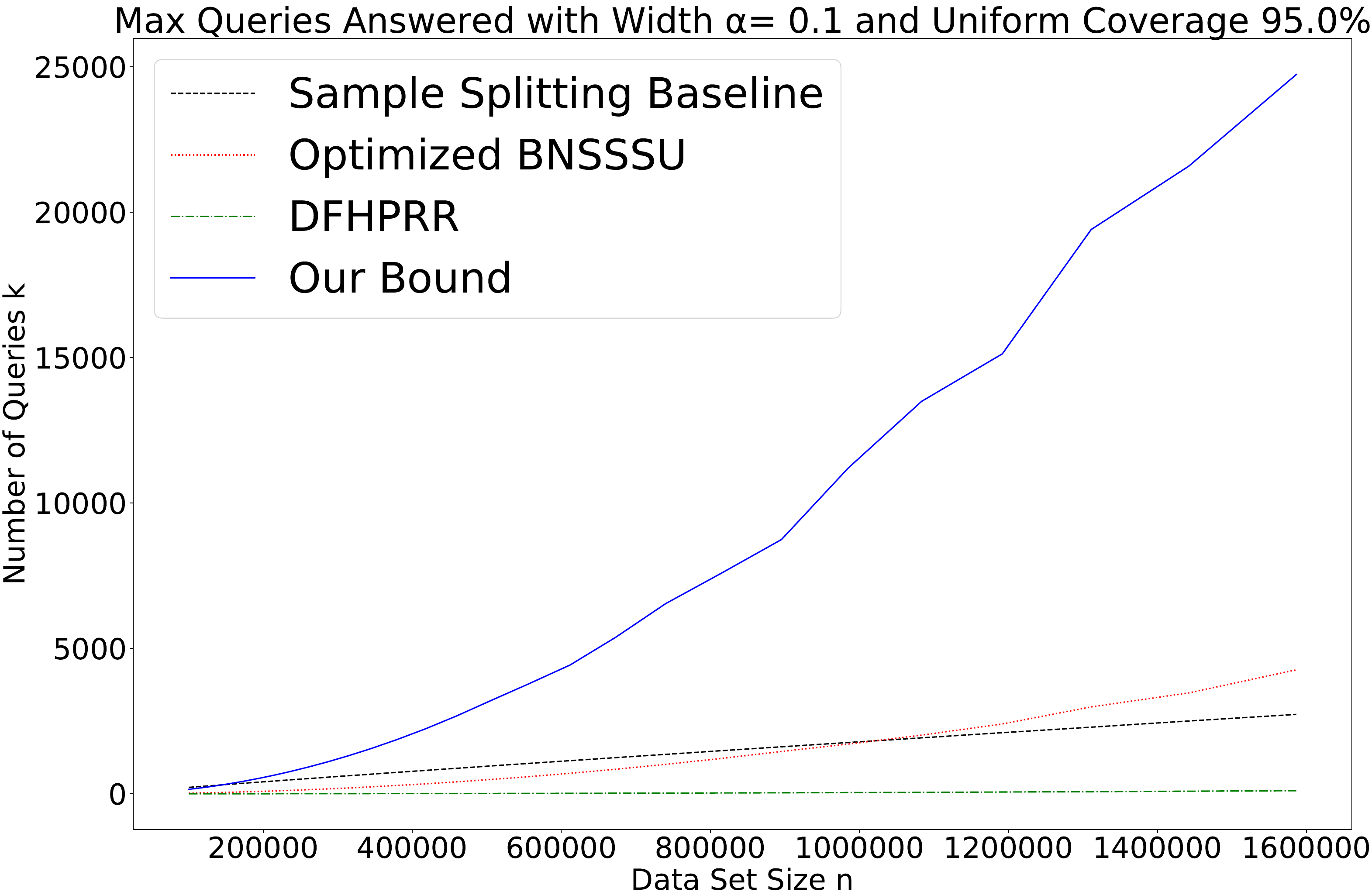}
\end{center}
\caption{A comparison of the number of adaptive linear queries that can be answered using the Gaussian mechanism as analyzed by our transfer theorem (Theorem \ref{thm:transfer2}), the numerically optimized variant of the bound from \citet{BNSSSU16} as derived in \cite{RRSSTW19}, and the original transfer theorem from \cite{DFHPRR15a}. We plot for each dataset size $n$, the number of queries $k$ that can be answered while guaranteeing confidence intervals around the answer that have width $\alpha = 0.1$ and uniform coverage probability $1-\beta = 0.95$. We compare with the naive sample splitting baseline that simply splits the dataset into $k$ pieces and answers each query with the empirical answer on a fresh piece.}
\end{figure}

\subsection{Proof Techniques}
\paragraph{Prior Work}
Consider an unknown data distribution $\cP$ over a data-domain $\cX$, and a dataset $S \sim \cP^n$ consisting of $n$ i.i.d. draws from $\cP$. It is a folklore observation (attributed to Frank McSherry) that if a predicate $q:\cX\rightarrow [0,1]$ is selected by an $\epsilon$-differentially private algorithm $M$ acting on $S$, then it will generalize \emph{in expectation} (or \emph{have low bias}) in the sense that $|\E_{q\sim M(S)}[\E_{x \sim \cP}[q(x)] - \frac{1}{n}\sum_{x\in S}q(x)]|  \approx \epsilon$.
But bounds on bias are not enough to yield confidence intervals (except through Markov's inequality), and so prior work has focused on strengthening the above observation into a high probability bound. For small $\epsilon$, the optimal bound has the asymptotic form: $\Pr_{q\sim M(S)}[|\E_{x \sim \cP}[q(x)] - \frac{1}{n}\sum_{x\in S}q(x)|\geq \epsilon] \leq e^{-O(\epsilon^2n)}$ \citep{BNSSSU16}. Note that this bound does not refer to the \emph{estimated answers} supplied to the data analyst: it says only that a differentially private data analyst is unlikely to be able to find a query whose average value on the dataset differs substantially from its expectation. Pairing this with a simultaneous high probability bound on the \emph{in-sample accuracy} of a mechanism---that it supplies answers $a$ such that with high probability the empirical error is small: $\Pr_{a \sim M(S)}[|a -   \frac{1}{n}\sum_{x\in S}q(x)|\geq \alpha]\leq \beta$---yields a bound on out-of-sample accuracy via the triangle inequality.

 \citet{DFHPRR15a} proved their high probability bound via a direct computation on the \emph{moments} of empirical query values, but this technique was unable to achieve the optimal rate. \citet{BNSSSU16} proved a bound with the optimal rate by introducing the ingenious \emph{monitor technique}. This important technique has subsequently found other uses \citep{SU17,NS19,FV18}, but is a heavy hammer that seems unavoidably to yield large constant overhead, even after numeric optimization \citep{RRSSTW19}.
\paragraph{Our Approach}
We take a fundamentally different approach by directly providing high probability bounds on the out-of-sample accuracy $|a-\E_{x \sim \cP}[q(x)]|$ of mechanisms that are both differentially private and accurate in-sample. Our elementary approach is motivated by the following thought experiment: in actuality, the dataset $S$ is fixed before any interaction with $M$ begins. However, imagine that after the entire interaction with $M$ is complete, the dataset $S$ is \emph{resampled} from the posterior distribution $\cQ$ on datasets \emph{conditioned} on the output of $M$. This thought experiment doesn't alter the joint distribution on datasets and outputs, and so any in-sample accuracy guarantees that $M$ has continue to hold under this hypothetical re-sampling experiment. But because the empirical value of the queries on the re-sampled dataset are likely to be close to their expected value over the posterior $\cQ$, the only way the mechanism can promise to be sample-accurate with high probability is if it provides answers that are \emph{close to their expected value over the posterior distribution with high probability}.

This focuses attention on the posterior distribution on datasets induced by differentially private transcripts. But it is not hard to show that a consequence of differential privacy is that the posterior expectation of any query must be close to its expectation over the data distribution with high probability. In contrast to prior work, this argument directly leverages high-probability in-sample accuracy guarantees of a private mechanism to derive high-probability out-of-sample guarantees, without the need for additional  machinery like the monitor argument of \citep{BNSSSU16}.

\subsection{Further Related Work}
The study of ``adaptive data analysis'' was initiated by \citet{DFHPRR15a,DFHPRR15b} who provided upper bounds via a connection to differential privacy, and \citet{HU14} who provided lower bounds via a connection to fingerprinting codes. The upper bounds were subsequently strengthened by \citet{BNSSSU16}, and the lower bounds by \citet{SU15} to be (essentially) matching, asymptotically. The upper bounds were optimized by \citet{RRSSTW19}, which we use in our comparisons. Subsequent work proved transfer theorems related to other quantities like description length bounds \citep{DFHPRR15c} and compression schemes \citep{CLNRW16},  and expanded the types of analyses whose generalization properties we could reason about via a connection to a quantity called approximate max information \citep{DFHPRR15c,RRST16}. \citet{FS17,FS18} gave improved methods that could guarantee out-of-sample accuracy bounds that depended on query variance. \citet{NR18} extend the transfer theorems from this literature to the related problem of adaptive data gathering, which was identified by \citet{NTTZ18}. \citet{LS19} give an algorithmic stability notion they call \emph{local statistical stability} (also defined with respect to a posterior data distribution) that they show asymptotically characterizes the ability of mechanisms to offer high probability out-of-sample generalization guarantees for linear queries. A related line of work initiated by \citet{RZ16} and extended by \citet{XR17} starts with weaker assumptions on the mechanism (mutual information bounds), and derives weaker conclusions (bounds on bias, rather than high probability generalization guarantees). 

A more recent line of work aims at mitigating the fact that the worst-case bounds deriving from transfer theorems do not give non-trivial guarantees on reasonably sized datasets. \citet{ZH19} show that better bounds can be derived under the assumption that the data analyst is restricted in various ways to not be fully adaptive. \citet{FFH19} showed that overfitting by a classifier because of test-set re-use is mitigated in multi-label prediction problems, compared to binary prediction problems. \citet{RRSSTW19} gave a method for certifying the correctness of heuristically guessed confidence intervals, which they show often out-perform the theoretical guarantees by orders of magnitude.

Finally, \citet{Eld16,Eld16b} proposed a Bayesian reformulation of the adaptive data analysis problem. In the model of \citep{Eld16}, the data distribution $\cP$ is assumed to itself be drawn from a prior that is commonly known to the data analyst and mechanism. In contrast, we work in the standard adversarial setting originally introduced by \citet{DFHPRR15a} in which the mechanism must offer guarantees for worst case data distributions and analysts, and use the Bayesian view purely as a proof technique.  

\section{Preliminaries}
Let $\cX$ be an abstract data domain, and let $\cP$ be an arbitrary distribution over $\cX$. A dataset of size $n$ is a collection of $n$ data records: $S = \{S_i\}_{i=1}^n \in \cX^n$. We study datasets sampled $i.i.d.$ from $\cP$: $S \sim \cP^n$. We will write $S$ to denote the random variable and $\boldx$ for realizations of this random variable. A linear query is a function $q:\cX^* \rightarrow [0,1]$ that takes the following empirical average form when acting on a data set $S \in \cX^n$:
$$
q(S) = \frac{1}{n}\sum_{i=1}^n q(S_i).
$$
We will be interested in estimating the expectations of linear queries over $\cP$. Abusing notation, given a distribution $\cD$ over datasets, we write $q(\cD)$ to denote the expectation of $q$ over datasets drawn from $\cD$, and write $S_i \sim S$ to denote a datapoint sampled uniformly at random from a dataset $S$. Note that for linear queries we have: 
\[  q(\cD) = \E_{S \sim \cD}[q(S)] = \E_{S \sim \cD, S_i \sim S}[q(S_i)]. \]
We note that for linear queries, when the dataset distribution $\cD = \cP^n$, we have $q(\cP^n) = \E_{x \sim \cP}[q(x)]$, which we write as $q(\cP)$ when the notation is clear from context. However, the more general definition will be useful because we will need to evaluate the expectation of $q$ over other (non-product) distributions over datasets in our arguments, and we will generalize beyond linear queries in Appendices \ref{sec:lowsens} and \ref{sec:optquery}.


Given a family of  queries $Q$, a statistical estimator is a (possibly stateful)  randomized algorithm $M:\cX^n\times Q^*\rightarrow \mathbb{R}^*$ parameterized by a dataset $S$ that interactively takes as input a stream of queries $q_i \in Q$, and provides answers $a_i \in \mathbb{R}$. An analyst is an arbitrary randomized algorithm $\cA:\mathbb{R}^*\rightarrow Q^*$ that generates a stream of queries and receives a stream of answers (which can inform the next queries it generates). When an analyst interacts with a statistical estimator, they generate a transcript of their interaction $\pi \in \pispace$ where $\pispace = (Q \times \mathbb{R})^*$ is the space of all transcripts.
 Throughout we write $\pirv$ to denote the transcript's random variable and $\pi$ for its realizations.

  \begin{algorithm2e}
    \caption{Interact$(M, \cA; S)$: An Analyst Interacting with a Statistical Estimator to Generate a Transcript} \label{alg:interact}
    \KwIn{A statistical estimator $M$, an analyst $\cA$, and a dataset $S \in \cX^n$.}
    \For{$t=1$ to $k$}
    {
      The analyst generates a query $q_t \leftarrow \cA(a_1,\ldots,a_{t-1})$ and sends it to the statistical estimator\;
      The statistical estimator generates an answer $a_t \leftarrow M(S; q_t)$\;
    }
    \Return $\pirv = ((q_1,a_1),\ldots,(q_k,a_k))$.
  \end{algorithm2e}
The interaction is summarized in Algorithm \ref{alg:interact}, and we write Interact$(M, \cA; S)$ to refer to it. When $M$ and $\cA$ are clear from context, we will abbreviate this notation and write simply $I(S)$. When we refer to an indexed query $q_j$, this is implicitly a function of the transcript $\pi$. Given a transcript $\pi \in \pispace$, write $\cQ_\pi$ to denote the posterior distribution on datasets conditional on $\pirv = \pi$: $\cQ_\pi = (\cP^n) | \textrm{Interact}(M, \cA; S)= \pi$. Note that $\cQ_{\pi}$ will no longer generally be a product distribution. 
We will be interested in evaluating uniform accuracy bounds, which control the worst-case error over all queries:

\begin{definition}
$M$ satisfies $(\alpha,\beta)$-sample accuracy if for every data analyst $\cA$ and every data distribution $\cP$,
$$\Pr_{S \sim \cP^n, \pirv \sim \textrm{Interact}(M, \cA; S)} [\max_j |q_j(S) - a_j| \geq \alpha] \leq \beta.$$
We say $M$ satisfies $(\alpha,\beta)$-distributional accuracy if for every data analyst $\cA$ and every data distribution $\cP$,
$$\Pr_{S \sim \cP^n, \pirv \sim \textrm{Interact}(M, \cA; S)} [\max_j |q_j(\cP^n) - a_j| \geq \alpha] \leq \beta.$$
\end{definition}


We will be interested in interactions $I$ that satisfy \emph{differential privacy}.
\begin{definition}[\cite{DMNS06}]
Two datasets $S, S' \in \cX^n$ are neighbors if they differ in at most one coordinate. An interaction $\textrm{Interact}(M, \cdot \,; \cdot)$ satisfies $(\epsilon,\delta)$-differential privacy if for all data analysts $\cA$, pairs of neighboring datasets $S, S' \in \cX^n$, and for all events $E \subseteq \pispace$:
$$ \Pr[ \textrm{Interact}(M, \cA; S)\in E] \leq e^\epsilon \cdot \Pr[ \textrm{Interact}(M, \cA; S') \in E]+\delta$$
where the $\Pr[\cdot]$ operator denotes either a probability density or a probability mass. If  $\textrm{Interact}(M, \cdot \,; \cdot)$ satisfies $(\epsilon,\delta)$-differential privacy, we will also say that $M$ satisfies $(\epsilon,\delta)$-differential privacy.
\end{definition}

We introduce a novel quantity that will be crucial to our argument: it captures the effect of the transcript on the change in the expectation of a query contained in the transcript.
\begin{definition}
An interaction $\textrm{Interact}(M, \cA ; \cdot)$ is called $(\epsilon,\delta)$-posterior sensitive if for every data distribution $\cP$:
$$\Pr_{S \sim \cP^n, \pirv \sim \textrm{Interact}(M, \cA; S)} [\max_j |q_j (\cP^n) - q_j (\cQ_{\pirv})| \geq \epsilon] \leq \delta.$$
\end{definition}

\section{An Elementary Proof of the Transfer Theorem}
\subsection{A General Transfer Theorem}
\label{sec:general}
In this section we prove a general transfer theorem for sample accurate mechanisms with low posterior sensitivity. In Section \ref{sec:transfer_for_DP} we prove that differentially private mechanisms have low posterior sensitivity.
\begin{theorem}[General Transfer Theorem]
\label{thm:gen_transfer}
Suppose that $\textrm{Interact}\,(M,\cA; \cdot)$ is an $(\alpha,\beta)$-sample accurate, $(\epsilon, \delta)$-posterior sensitive interaction. Then for every $c > 0$ it also satisfies:
$$\Pr_{S \sim \cP^n, \, \pirv \sim \textrm{Interact}\,(M,\cA; S)}[\max_{j} |a_j - q_j(\cP)| > \alpha +c + \epsilon] \leq \frac{\beta}{c} + \delta$$
i.e. it is $(\alpha', \beta')$-distributionally accurate for $\alpha' = \alpha +c + \epsilon $ and $\beta' = \frac{\beta}{c} + \delta$.
\end{theorem}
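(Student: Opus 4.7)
The plan is to decompose the out-of-sample error via the triangle inequality,
$$|a_j - q_j(\cP)| \;\le\; |a_j - q_j(\cQ_\pirv)| \;+\; |q_j(\cQ_\pirv) - q_j(\cP)|,$$
and control each term separately. Posterior sensitivity directly handles the second term: with probability at least $1-\delta$ over $(S,\pirv)$, every index $j$ satisfies $|q_j(\cQ_\pirv) - q_j(\cP)| \le \epsilon$. I would then focus on showing that, with probability at least $1 - \beta/c$, every $j$ satisfies $|a_j - q_j(\cQ_\pirv)| \le \alpha + c$; combining these two tail bounds and the triangle inequality through one more union bound yields the conclusion.

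For the first term I would invoke the Bayesian thought experiment emphasized in the introduction. The joint distribution of $(S,\pirv)$ with $S\sim\cP^n$ and $\pirv\sim I(S)$ is identical to the joint distribution obtained by first drawing $\pirv$ from its marginal and then resampling $S'\sim\cQ_\pirv$. The $(\alpha,\beta)$-sample accuracy guarantee of $M$ is a statement about this joint distribution, so it applies equally well to the resampled dataset:
$$\Pr_{\pirv,\,S'\sim\cQ_\pirv}\bigl[\max_j |q_j(S') - a_j|\ge \alpha\bigr]\;\le\;\beta.$$
Conditional on $\pirv$, the answers $a_j$ are fixed while $q_j(S')$ is a random variable with mean exactly $q_j(\cQ_\pirv)$.

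The crux is a Markov-type step. I assume without loss of generality that $a_j\in[0,1]$, since clipping each answer to $[0,1]$ can only improve both sample and distributional accuracy and does not affect posterior sensitivity (which depends only on the queries, not on the answers). Fix a transcript $\pi$ on which the bad event $\max_j |a_j - q_j(\cQ_\pi)|>\alpha+c$ holds and let $j^\ast=j^\ast(\pi)$ attain the maximum. For $Y = |a_{j^\ast} - q_{j^\ast}(S')|$ with $S'\sim\cQ_\pi$, Jensen's inequality applied to the convex map $q\mapsto|a_{j^\ast}-q|$ gives
$$\E_{S'\sim\cQ_\pi}[Y]\;\ge\;|a_{j^\ast} - q_{j^\ast}(\cQ_\pi)|\;>\;\alpha+c.$$
Since $Y\in[0,1]$, splitting $\E[Y]$ according to whether $Y\ge\alpha$ and using $Y\le 1$ on the tail yields $\Pr_{S'}[Y\ge\alpha]>c$: conditional on any bad transcript, with probability strictly exceeding $c$ the resampled dataset witnesses a sample-accuracy failure at index $j^\ast$.

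Integrating this conditional lower bound against the marginal distribution of $\pirv$ gives
$$\beta \;\ge\; \Pr_{\pirv,\,S'}\bigl[\max_j |q_j(S') - a_j|\ge\alpha\bigr] \;\ge\; \Pr_{\pirv,\,S'}\bigl[|q_{j^\ast(\pirv)}(S') - a_{j^\ast(\pirv)}|\ge\alpha\bigr] \;\ge\; c\cdot\Pr_{\pirv}[\text{bad}],$$
so the bad event has probability at most $\beta/c$. A union bound with the posterior-sensitivity failure event completes the proof. The main obstacle is this Markov-style step: everything else is a tidy arrangement of the triangle inequality, Jensen, and the resampling identity, but the deduction that $\E[Y]>\alpha+c$ forces $\Pr[Y\ge\alpha]>c$ is the only place the free parameter $c$ arises and the only place the (harmless) boundedness assumption on $a_j$ is used.
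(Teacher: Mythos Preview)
Your proof is correct and follows the same route as the paper: the triangle-inequality decomposition, posterior sensitivity for the second term, and a resampling-plus-Markov bound (the paper's Lemma~\ref{lem:posterioraccuracy1}) for the first. The only cosmetic differences are that the paper handles the two signs of $a_{j^*}-q_{j^*}(\cQ_\pirv)$ separately rather than invoking Jensen on the absolute value, and applies Markov directly to the marginal over transcripts rather than conditioning on a bad $\pi$ first; the boundedness assumption $a_j\in[0,1]$ that you flag is used implicitly in the paper's inequality~(2) as well.
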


The theorem follows easily from a change in perspective driven by an elementary observation. Imagine that \emph{after} the interaction is run and results in a transcript $\pi$, the dataset $S$ is \emph{resampled} from its posterior distribution $\cQ_\pi$. This does not change the joint distribution on datasets and transcripts. This simple claim is formalized below: its elementary proof appears in Appendix \ref{sec:app1}.

\begin{lemma}[Bayesian Resampling Lemma]
\label{lem:resamp}
Let $E \subseteq \cX^n \times \pispace$ be any event. Then:
$$\Pr_{S \sim \cP^n, \pirv \sim I(S)}[(S, \pirv) \in E] = \Pr_{S \sim \cP^n, \pirv \sim I(S), S' \sim \cQ_\pirv}[(S', \pirv) \in E]$$
\end{lemma}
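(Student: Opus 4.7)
The plan is to show that the two joint distributions on $\cX^n \times \pispace$ appearing in the statement are in fact the same measure, so that every event $E$ receives the same probability under both. The only substantive input is the definition given just above the lemma: $\cQ_\pi$ is precisely the conditional law of $S$ given $\pirv = \pi$ in the LHS experiment.

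I would factor each joint distribution as (marginal of $\pirv$) $\times$ (conditional of the first coordinate given $\pirv$), and check that the two factors agree on the two sides. The marginals on $\pirv$ agree because on the RHS the transcript is produced by exactly the same procedure as on the LHS---the resampling of $S'$ occurs after $\pirv$ is drawn and so does not affect the law of $\pirv$. For the conditional factor: on the LHS, by the definition of $\cQ_\pi$, the conditional law of $S$ given $\pirv = \pi$ is exactly $\cQ_\pi$; on the RHS, by construction $S'$ is drawn from $\cQ_\pi$ once $\pirv = \pi$ is fixed, so its conditional law given $\pirv = \pi$ is again $\cQ_\pi$. The two joints therefore coincide, and integrating the indicator $\1_E$ against this common joint yields the claimed equality of probabilities.

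There is essentially no conceptual obstacle; the lemma is a tautological unpacking of the definition of posterior. The only care required is the mild measure-theoretic bookkeeping of marginalizing out the internal randomness of $M$ and $\cA$ to obtain the law of $\pirv$, and confirming that conditioning on $\{\pirv = \pi\}$ under this law produces exactly the object that was named $\cQ_\pi$ in the Preliminaries. Once these steps are in place the identity is immediate, and if one prefers one can also present the proof as a single chain of equalities in which Bayes' rule is applied explicitly to $\Pr[S = \boldx, \pirv = \pi] = \Pr[\pirv = \pi]\cdot \cQ_\pi(\boldx)$ and then summed (or integrated) over $(\boldx, \pi) \in E$.
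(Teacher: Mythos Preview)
Your proposal is correct and essentially the same as the paper's proof: the paper expands the RHS as a sum over $(\boldx,\pi,\boldx')$, marginalizes out $\boldx$ to get $\Pr[\pirv=\pi]\cdot\cQ_\pi(\boldx')\cdot\1[(\boldx',\pi)\in E]$, and then applies Bayes' rule to rewrite $\cQ_\pi(\boldx')=\Pr[\pirv=\pi\mid S=\boldx']\Pr[S=\boldx']/\Pr[\pirv=\pi]$, recovering the LHS. Your factor-the-joint-as-marginal-times-conditional argument is the same computation stated at a slightly higher level of abstraction, and indeed your final sentence describes exactly the chain of equalities the paper writes down.
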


The change in perspective suggested by the resampling lemma makes it easy to see why the following must be true: any sample-accurate mechanism must in fact be accurate with respect to the posterior distribution it induces. This is because if it can first commit to answers, and guarantee that they are sample-accurate \emph{after} the dataset is resampled from the posterior, the answers it committed to must have been close to the posterior means, because it is likely that the empirical answers on the resampled dataset will be. This argument is generic and does not use differential privacy.

\begin{lemma}
\label{lem:posterioraccuracy1}
Suppose that $M$ is $(\alpha,\beta)$-sample accurate. Then for every $c > 0$ it also satisfies:
$$\Pr_{S \sim \cP^n, \, \pirv \sim \textrm{Interact}\,(M,\cA; S)}[\max_{j} |a_j - q_j( \cQ_\pirv)| > \alpha +c] \leq \frac{\beta}{c}$$
\end{lemma}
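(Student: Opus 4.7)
The plan is to use the Bayesian Resampling Lemma (Lemma \ref{lem:resamp}) to convert the sample-accuracy guarantee for the original dataset $S$ into a sample-accuracy guarantee for a dataset $S'$ freshly resampled from the posterior $\cQ_\pirv$ \emph{after} the transcript $\pirv$ has been committed, and then to combine this with a one-sided Markov bound to argue that the committed answers $a_j$ must lie close to the posterior means $q_j(\cQ_\pirv)$. Crucially, the argument uses no property of $M$ beyond sample accuracy.

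First I would define the bad event $B = \{\pirv : \max_j |a_j - q_j(\cQ_\pirv)| > \alpha + c\}$ and, on $B$, let $j^\star(\pirv)$ achieve the maximum. Once $\pirv$ is fixed, $a_{j^\star}$ is deterministic, and the identity $q_{j^\star}(\cQ_\pirv) = \E_{S' \sim \cQ_\pirv}[q_{j^\star}(S')]$ gives
\begin{align*}
\E_{S' \sim \cQ_\pirv}[\,a_{j^\star} - q_{j^\star}(S')\,] \;=\; a_{j^\star} - q_{j^\star}(\cQ_\pirv),
\end{align*}
whose absolute value exceeds $\alpha + c$ on $B$.

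Next I would apply a one-sided Markov step. Assume without loss of generality $a_{j^\star} - q_{j^\star}(\cQ_\pirv) > \alpha + c$ (the opposite sign is symmetric), and let $W = a_{j^\star} - q_{j^\star}(S')$. Since $a_{j^\star}, q_{j^\star}(S') \in [0,1]$ we have $W \leq 1$, so the elementary bound $\E[W] \leq \alpha + (1-\alpha)\Pr[W > \alpha]$ combined with $\E[W] > \alpha + c$ yields $\Pr_{S' \sim \cQ_\pirv}[\,|a_{j^\star} - q_{j^\star}(S')| > \alpha\,] > c$. Averaging over $\pirv \in B$,
\begin{align*}
\Pr_{\pirv,\, S' \sim \cQ_\pirv}\!\bigl[\max_j |q_j(S') - a_j| > \alpha\bigr] \;\geq\; c \cdot \Pr_{\pirv}[B].
\end{align*}

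Finally, the Bayesian Resampling Lemma identifies the left-hand side above with $\Pr_{S \sim \cP^n,\, \pirv \sim I(S)}[\max_j |q_j(S) - a_j| > \alpha]$, which is at most $\beta$ by the $(\alpha,\beta)$-sample accuracy hypothesis. Rearranging yields $\Pr_\pirv[B] \leq \beta/c$, as required. The only delicate point I anticipate is the one-sided Markov step, which relies on $q_j(S'), a_j \in [0,1]$ (standard for linear queries into $[0,1]$) and a clean reduction to a single sign; the rest is mechanical once the resampling viewpoint is adopted.
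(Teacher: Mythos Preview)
Your proposal is correct and follows essentially the same approach as the paper: both use the Bayesian Resampling Lemma to reinterpret sample accuracy as accuracy on $S'\sim\cQ_\pirv$, and then exploit the boundedness $a_j,q_j(S')\in[0,1]$ together with a Markov-type inequality to pass from the posterior mean to a tail probability. The only cosmetic difference is that the paper applies Markov over the distribution of $\pirv$ and then bounds $\E_{S'}[\max\{a_{j^*}-q_{j^*}(S')-\alpha,0\}]$ by $\Pr_{S'}[a_{j^*}-q_{j^*}(S')>\alpha]$, whereas you condition on the bad event $B$ and argue pointwise per transcript before integrating; these are two packagings of the same computation.
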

\begin{proof}
Denote by $j^*(\pi) = \underset{j}{\argmax} |a_j - q_j( \cQ_\pi)|$. Given $\alpha \geq 0$ and $c > 0$, and expanding the definition of $q_{j^{*}(\pirv)}(\cQ_{\pirv})$  we get:
\begin{align*}
\underset{S \sim \cP^{n}, \pirv \sim I(S)}{\text{Pr}}& \left[a_{j^{*}(\pirv)} - q_{j^{*}(\pirv)}(\cQ_{\pirv}) > \alpha + c \right]\\
= & \underset{S \sim \cP^{n}, \pirv \sim I(S)}{\text{Pr}} \left[\underset{S' \sim \cQ_{\pirv}}{\mathbb{E}}\left[a_{j^{*}(\pirv)} - q_{j^{*}(\pirv)}(S') - \alpha \right] > c\right] \\
\leq & \underset{S \sim \cP^{n}, \pirv \sim I(S)}{\text{Pr}} \left[ \underset{S' \sim \cQ_{\pirv}}{\mathbb{E}} \left[\max \left\{ a_{j^{*}(\pirv)} - q_{j^{*}(\pirv)}(S') - \alpha, 0 \right\} \right] > c \right]\\
\overset{(1)}{\leq} & \frac{1}{c} \underset{S \sim \cP^{n}, \pirv \sim I(S)}{\mathbb{E}} \left[ \underset{S' \sim \cQ_{\pirv}}{\mathbb{E}} \left[\max \left\{ a_{j^{*}(\pirv)} - q_{j^{*}(\pirv)}(S') - \alpha, 0 \right\} \right] \right]\\
\overset{(2)}{\leq} & \frac{1}{c} \underset{S \sim \cP^{n}, \pirv \sim I(S)}{\mathbb{E}} \left[ \underset{S' \sim \cQ_{\pirv}}{\text{Pr}} \left[a_{j^{*}(\pirv)} - q_{j^{*}(\pirv)}(S') - \alpha > 0 \right] \right] \\
= & \frac{1}{c} \underset{S \sim \cP^{n}, \pirv \sim I(S), S' \sim \cQ_{\pirv}}{\text{Pr}} \left[a_{j^{*}(\pirv)} - q_{j^{*}(\pirv)}(S') > \alpha \right] \\
\overset{(3)}{=} & \frac{1}{c} \underset{S \sim \cP^{n}, \pirv \sim I(S)}{\text{Pr}} \left[a_{j^{*}(\pirv)} - q_{j^{*}(\pirv)}(S) > \alpha \right]
\end{align*}
Here, inequality (1) follows from Markov's inequality, inequality (2) follows from the fact that $a_{j^{*}(\pirv)} - q_{j^{*}(\pirv)}(S') - \alpha \leq 1$, and equality 3 follows from the Bayesian Resampling Lemma (Lemma \ref{lem:resamp}). 
Repeating this argument for $q_{j^{*}(\pirv)}(Q_{\pirv}) - a_{j^{*}(\pirv)}$ yields a symmetric bound, so by combining the two with the guarantee of $(\alpha,\beta)$-sample accuracy we get,
$$
\underset{S \sim \cP^{n}, \pirv \sim I(S)}{\text{Pr}} \left[ \left| a_{j^{*}(\pirv)} - q_{j^{*}(\pirv)}(Q_{\pirv}) \right| > \alpha + c \right] \leq \frac{1}{c} \underset{S \sim \cP^{n}, \pirv \sim I(S)}{\text{Pr}} \left[ \left| a_{j^{*}(\pirv)} - q_{j^{*}(\pirv)}(S) \right| > \alpha \right] \leq \frac{\beta}{c}
$$
\end{proof}

Because sample accuracy implies accuracy with respect to the posterior distribution, together with a bound on posterior sensitivity, the transfer theorem follows immediately:
\begin{proof}[Proof of Theorem \ref{thm:gen_transfer}]
By the triangle inequality:
$$\max_j |a_{j} - q_{j}(\cP)|  \leq \max_{i} |a_{i} - q_{i}( \cQ_\pirv)| + \max_l |q_{l}( \cQ_\pirv) - q_{l}(\cP)|.$$

Lemma \ref{lem:posterioraccuracy1} bounds the first term by $\alpha + c$ with probability $1-\frac{\beta}{c}$ over $\pirv$, and the definition of posterior sensitivity bounds the second term by $\epsilon$ with probability $1-\delta$ over $\pirv$, which concludes the proof.
\end{proof} 

\subsection{A Transfer Theorem for Differential Privacy}
\label{sec:transfer_for_DP}
In this section we  prove a transfer theorem for differentially private mechanisms by demonstrating that they have low posterior sensitivity and applying our general transfer theorem.

We here show that differentially private mechanisms are posterior-sensitive for linear queries. In the Appendix we extend this argument to low-sensitivity and optimization queries.
\begin{lemma}
\label{lem:expdelta}
If $M$ is $(\epsilon,\delta)$-differentially private, then for any data distribution $\cP$, any analyst $\cA$, and any constant $c > 0$:
$$
\Pr_{S \sim \cP^n, \pirv \sim \textrm{Interact}\,(M,\cA; S)} \left[ \max_j \left| q_j (\cQ_\pirv)  - q_j (\cP) \right| > (e^\epsilon - 1) + 2c \right] \le \frac{\delta}{c}
$$
i.e. it is $(\epsilon', \delta')$-posterior sensitive for every data analyst $\cA$, where $\epsilon' = e^{\epsilon}-1 + 2 c$ and $\delta' = \frac{\delta}{c}$.
\end{lemma}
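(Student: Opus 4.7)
The plan is to mimic the adversarial-query template used in the proof of Lemma \ref{lem:posterioraccuracy1}: instead of bounding each $q_j$ separately and paying a union bound factor over the $k$ queries, I will collapse everything into a single transcript-dependent query $\tilde q(\pi) = q_{j^*(\pi)}$ that witnesses the worst posterior-vs-prior gap. I then bound the probability that this gap is large by combining the Bayesian Resampling Lemma with a one-shot application of $(\epsilon,\delta)$-differential privacy to a carefully chosen $[0,1]$-valued function of $(\pi,x)$, finishing with a Markov-style rearrangement.

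Concretely, let $B_+ = \{\pi : \max_j (q_j(\cQ_\pi) - q_j(\cP)) > (e^\epsilon-1)+2c\}$ and pick $j^*(\pi) \in \argmax_j (q_j(\cQ_\pi) - q_j(\cP))$, defining $\tilde q(\pi) = q_{j^*(\pi)}$ (fixed arbitrarily off $B_+$). Set $h(\pi,u) = \1_{B_+}(\pi)\cdot \tilde q(\pi)(u) \in [0,1]$. Because $\tilde q(\pi)(\cQ_\pi) = \E_{S'\sim\cQ_\pi, i\sim U([n])}[\tilde q(\pi)(S'_i)]$, Lemma \ref{lem:resamp} gives
\[
\E_\pirv\!\left[\1_{B_+}(\pirv)\,\tilde q(\pirv)(\cQ_\pirv)\right] \;=\; \E_{S\sim\cP^n,\,\pirv\sim I(S),\,i\sim U([n])}\!\left[h(\pirv,S_i)\right],
\]
while the analogous unfolding of $\tilde q(\pi)(\cP) = \E_{x\sim\cP}[\tilde q(\pi)(x)]$ yields $\E_\pirv[\1_{B_+}(\pirv)\tilde q(\pirv)(\cP)] = \E_{S,\pirv,\,x\sim\cP}[h(\pirv,x)]$, with $x$ drawn independently of $\pirv$.

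Next, I would use $(\epsilon,\delta)$-DP to compare these two expressions. By applying DP to neighbors that differ only in coordinate $i$ and averaging over the remaining coordinates, for every fixed $x$ and every measurable event $E\subseteq\pispace$, $\Pr[\pirv\in E \mid S_i=x] \le e^\epsilon\Pr[\pirv\in E] + \delta$. Integrating this against $h(\cdot,x)\in[0,1]$, averaging over $x\sim\cP$, and averaging over $i$ yields $\E_{S,\pirv,i}[h(\pirv,S_i)] \le e^\epsilon \E_{S,\pirv,x\sim\cP}[h(\pirv,x)] + \delta$. Rearranging and using $h\le \1_{B_+}$,
\[
\E_\pirv\!\left[\1_{B_+}(\pirv)\bigl(\tilde q(\pirv)(\cQ_\pirv) - \tilde q(\pirv)(\cP)\bigr)\right] \;\le\; (e^\epsilon-1)\Pr[\pirv\in B_+] + \delta.
\]
But on $B_+$ the bracketed quantity exceeds $(e^\epsilon-1)+2c$, so the left-hand side is at least $\bigl((e^\epsilon-1)+2c\bigr)\Pr[\pirv\in B_+]$. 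Cancelling $(e^\epsilon-1)\Pr[\pirv\in B_+]$ from both sides gives $2c\Pr[\pirv\in B_+] \le \delta$, i.e.\ $\Pr[\pirv\in B_+] \le \delta/(2c)$. The reverse direction, $B_- = \{\pi:\max_j(q_j(\cP)-q_j(\cQ_\pi))>(e^\epsilon-1)+2c\}$, is handled by the identical argument applied to $1 - \tilde q(\pi)\in[0,1]$. A union bound then gives $\Pr[\pirv\in B_+\cup B_-]\le \delta/c$, which is exactly the claimed posterior sensitivity bound.

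The step I expect to need the most care is the DP reduction from neighboring-dataset indistinguishability to the joint $(S_i,\pirv)$ vs.\ product inequality $\E_{S,\pirv,i}[h(\pirv,S_i)]\le e^\epsilon\E_{S,\pirv,x\sim\cP}[h(\pirv,x)]+\delta$. Everything else (resampling, rearrangement, the $1-\tilde q$ trick for the lower tail) is essentially bookkeeping, but correctly threading the adversarial choice of $\tilde q$ through the DP bound — so that the $\delta$ slack appears additively and can be traded against $2c$ in one Markov step — is what produces the uniform-in-$j$ guarantee without paying a union-bound factor of $k$.
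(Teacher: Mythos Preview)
Your proposal is correct and follows essentially the same route as the paper's proof: both arguments select the worst-gap query $j^*(\pi)$, compare the \emph{joint} distribution of $(S_i,\pirv)$ against the \emph{product} distribution, use the one-shot consequence of $(\epsilon,\delta)$-DP (Lemma~\ref{lem:tech2}) to upper-bound the discrepancy by $(e^\epsilon-1)\Pr[\pirv\in B_+]+\delta$, and then cancel against the lower bound $((e^\epsilon-1)+2c)\Pr[\pirv\in B_+]$ to get $\Pr[\pirv\in B_+]\le\delta/(2c)$, finishing with a symmetric bound and a union. The one presentational difference is that the paper introduces the auxiliary set $\cX^+(\pi)=\{x:\Pr_{S\sim\cQ_\pi,S_i\sim S}[S_i=x]>\Pr_{S_i\sim\cP}[S_i=x]\}$ and applies DP to the \emph{event} $\{(x,\pi):\pi\in\pispace_\alpha,\,x\in\cX^+(\pi)\}$, whereas you bypass $\cX^+(\pi)$ entirely by applying DP to the $[0,1]$-valued function $h(\pi,x)=\1_{B_+}(\pi)\tilde q(\pi)(x)$ via a layer-cake integral; this is a mild streamlining but not a different idea.
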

\begin{proof}
Given a transcript $\pi \in \pispace$, let $j^*(\pi) \in \argmax_{j} \left| q_j (\cQ_\pi)  - q_j (\cP) \right|$. define for an $\alpha > 0$:
$$
\pispace_\alpha = \left\{ \pi \in \pispace | \, q_{j^*(\pi)} (\cQ_\pi) - q_{j^*(\pi)} (\cP) > \alpha \right\}
$$
$$
\cX^{+} (\pi) = \left\{ x \in \cX | \, \Pr_{S \sim \cQ_\pi, S_i \sim S}  \left[ S_i = x \right] > \Pr_{S_i \sim \cP} \left[ S_i = x \right] \right\}
$$
$$
B_\alpha^+ = \bigcup_{\pi \in \pispace_\alpha} \left( \cX^{+} (\pi) \times \{ \pi \} \right)
$$
$$
\pispace_\alpha^+(x) = \left\{ \pi \in \pispace | \, (x,\pi) \in B_\alpha^+ \right\}
$$
Fix any $\alpha$. Suppose that $\Pr \left[ \left| q_{j^*(\pirv)} (\cQ_\pirv)  - q_{j^*(\pirv)} (\cP) \right| > \alpha \right] > \frac{\delta}{c}$. We must have that either \\ $\Pr \left[ q_{j^*(\pirv)} (\cQ_\pirv)  - q_{j^*(\pirv)} (\cP) > \alpha \right] > \frac{\delta}{2c}$ or $\Pr \left[ q_{j^*(\pirv)} (\cP) - q_{j^*(\pirv)} (\cQ_\pirv)  > \alpha \right] > \frac{\delta}{2c}$. Without loss of generality, assume
\begin{equation}\label{eq:delta1}
\Pr \left[ q_{j^*(\pirv)} (\cQ_\pirv)  - q_{j^*(\pirv)} (\cP) > \alpha \right] = \Pr \left[\pirv \in \pispace_\alpha \right] > \frac{\delta}{2c}
\end{equation}
Let $S_i$ be the random variable obtained by first sampling $S \sim \cP^n$ and then sampling $S_i \in S$ uniformly at random. We compare the probability measure of $B_\alpha^+$ under the joint distribution on $S_i$ and $\pirv$ with its corresponding measure under the product distribution of $S_i$ and $\pirv$:
\begin{align*}
&\Pr_{(S_i, \pirv)} \left[ (S_i, \pirv) \in B_\alpha^+ \right] - \Pr_{S_i \otimes \pirv } \left[ (S_i, \pirv) \in B_\alpha^+ \right] \\
&= \sum_{\pi \in \pispace_\alpha} \Pr [\pirv = \pi ] \sum_{x \in \cX^{+} (\pi)} \left( \Pr [S_i = x | \pirv = \pi] - \Pr [S_i = x] \right) \\
&\ge \sum_{\pi \in \pispace_\alpha} \Pr [\pirv = \pi] \sum_{x \in \cX^{+} (\pi) } q_{j^*(\pi)} (x) \left( \Pr [S_i = x | \pirv = \pi] - \Pr [S_i = x] \right) \\
&\ge \sum_{\pi \in \pispace_\alpha} \Pr [\pirv = \pi] \sum_{x \in \cX } q_{j^*(\pi)} (x) \left( \Pr [S_i = x | \pirv = \pi] - \Pr [S_i = x] \right) \\
&= \sum_{\pi \in \pispace_\alpha} \Pr [\pirv = \pi] \left( q_{j^*(\pi)} (\cQ_\pi) - q_{j^*(\pi)} (\cP) \right) \\
&> \alpha \cdot \Pr \left[ \pirv \in \pispace_\alpha \right]
\end{align*}
On the other hand, using the definition of $(\epsilon, \delta)$-differential privacy (See Lemma \ref{lem:tech2} for the elementary derivation of the first inequality):
\begin{align*}
&\Pr_{(S_i, \pirv)} \left[ (S_i, \pirv) \in B_\alpha^+ \right] - \Pr_{S_i \otimes \pirv } \left[ (S_i, \pirv) \in B_\alpha^+ \right] \\
&= \sum_{x \in \cX} \Pr [ S_i = x ] \left( \Pr \left[\pirv \in \pispace_\alpha^+(x) | S_i = x\right] - \Pr \left[\pirv \in \pispace_\alpha^+(x) \right] \right) \\
&\le \sum_{x \in \cX} \Pr [ S_i = x ] \left( (e^\epsilon - 1) \Pr \left[\pirv \in \pispace_\alpha^+(x) \right] + \delta \right) \\
&= (e^\epsilon - 1) \Pr_{S_i \otimes \pirv } \left[ (S_i, \pirv) \in B_\alpha^+ \right] + \delta \\
&\le (e^\epsilon - 1) \Pr \left[\pirv \in \pispace_\alpha \right] + \delta \\
& < (e^\epsilon - 1) \Pr \left[\pirv \in \pispace_\alpha \right] + 2c \Pr \left[\pirv \in \pispace_\alpha \right] \quad \text{(by Equation (\ref{eq:delta1}))}\\
&= ((e^\epsilon - 1) + 2c) \cdot \Pr \left[\pirv \in \pispace_\alpha \right]
\end{align*}
This is a contradiction for $\alpha \geq (e^\epsilon - 1) + 2c$.
\end{proof}

\begin{remark}
Note
\begin{enumerate}
\item Since differential privacy is closed under post processing, this claim can be generalized beyond queries contained in the transcript to any query generated as function of the transcript.

\item In the case of $(\epsilon, 0)$-differential privacy, choosing $c=0$, the claim holds for every query with probability 1.
\end{enumerate}
\end{remark}

Combined with our general transfer theorem (Theorem \ref{thm:gen_transfer}), this directly yields a transfer theorem for differential privacy:

\begin{theorem}[Transfer Theorem for $(\epsilon,\delta)$-Differential Privacy]
\label{thm:transfer2}
Suppose that $M$ is $(\epsilon,\delta)$-differentially private and $(\alpha,\beta)$-sample accurate for linear queries. Then for every analyst $\cA$ and $c,d > 0$ it also satisfies:
$$\Pr_{S \sim \cP^n, \, \pirv \sim \textrm{Interact}\,(M,\cA; S)}[\max_{j} |a_j - q_j(\cP)| > \alpha +(e^\epsilon-1)+ c + 2d] \leq \frac{\beta}{c} + \frac{\delta}{d}$$
i.e. it is $(\alpha', \beta')$-distributionally accurate for $\alpha' = \alpha +(e^\epsilon-1)+ c + 2d$ and $\beta' = \frac{\beta}{c} + \frac{\delta}{d}$.
\end{theorem}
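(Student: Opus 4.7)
The plan is to apply the General Transfer Theorem (Theorem \ref{thm:gen_transfer}) as a black box, with the posterior sensitivity parameters supplied by Lemma \ref{lem:expdelta}. Since Theorem \ref{thm:gen_transfer} reduces distributional accuracy to the conjunction of sample accuracy and posterior sensitivity, and since Lemma \ref{lem:expdelta} already establishes that $(\epsilon,\delta)$-differential privacy implies posterior sensitivity, the result should follow by a straightforward substitution of parameters. The sample-accuracy hypothesis of Theorem \ref{thm:transfer2} feeds directly into the first hypothesis of Theorem \ref{thm:gen_transfer}, and the privacy hypothesis feeds (via Lemma \ref{lem:expdelta}) into the second.

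First I would invoke Lemma \ref{lem:expdelta} with a free parameter $d > 0$ in place of its internal constant, which yields that the interaction is $(e^\epsilon - 1 + 2d,\, \delta/d)$-posterior sensitive for every analyst $\cA$. Next I would apply Theorem \ref{thm:gen_transfer} with the sample-accuracy parameters $(\alpha,\beta)$, the posterior-sensitivity parameters $(e^\epsilon - 1 + 2d,\, \delta/d)$ just obtained, and a second free constant $c > 0$ in the role of the $c$ appearing in Theorem \ref{thm:gen_transfer}. The conclusion of the General Transfer Theorem then gives
\[
\Pr\left[\max_j |a_j - q_j(\cP)| > \alpha + c + (e^\epsilon - 1 + 2d)\right] \leq \frac{\beta}{c} + \frac{\delta}{d},
\]
which is exactly the statement of Theorem \ref{thm:transfer2} after regrouping the additive terms inside the probability as $\alpha + (e^\epsilon - 1) + c + 2d$.

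There is essentially no technical obstacle here: the theorem is by design a corollary of the two preceding results, and the only care needed is to introduce two \emph{distinct} free parameters, namely $c$ governing the Markov-inequality slack from Lemma \ref{lem:posterioraccuracy1} (the posterior-accuracy step inside Theorem \ref{thm:gen_transfer}) and $d$ governing the slack from the posterior-sensitivity step in Lemma \ref{lem:expdelta}. Keeping these separate is what allows the two failure probabilities $\beta/c$ and $\delta/d$ to be tuned independently when the theorem is instantiated in applications, which is in turn what makes the resulting concrete bound tight enough to beat sample splitting at moderate $n$, as advertised in the introduction and Figure \ref{fig:intro}.
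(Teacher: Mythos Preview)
Your proposal is correct and matches the paper's approach exactly: the paper does not even write out a proof of Theorem \ref{thm:transfer2}, but simply states that Lemma \ref{lem:expdelta} combined with Theorem \ref{thm:gen_transfer} ``directly yields'' the result. Your parameter substitution (using $d$ in Lemma \ref{lem:expdelta} and $c$ in Theorem \ref{thm:gen_transfer}) is precisely what is needed.
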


\begin{remark}
As we will see in Section \ref{sec:app}, the Gaussian mechanism (and many other differentially private mechanisms) has a sample accuracy bound that depends only on the square root of the log of both $1/\beta$ and $1/\delta$. Thus, despite the Markov-like term $\beta' = \frac{\beta}{c} + \frac{\delta}{d}$ in the above transfer theorem, together with the sample accuracy bounds of the Gaussian mechanism, it yields Chernoff-like concentration.
\end{remark}

Our technique extends easily to reason about arbitrary low sensitivity queries and minimization queries. See Appendix \ref{sec:lowsens} and \ref{sec:optquery} for more details.

\section{Applications: The Gaussian Mechanism}
\label{sec:app}

We now apply our new transfer theorem to derive the concrete bounds that we plotted in Figure \ref{fig:intro}.
The Gaussian mechanism is extremely simple and has only a single parameter $\sigma$: for each query $q_i$ that arrives, the Gaussian mechanism returns the answer $a_i \sim \mathcal{N}(q_i(S), \sigma^2)$ where $\mathcal{N}(q_i(S),\sigma^2)$ denotes the Gaussian distribution with mean $q_i(S)$ and standard deviation $\sigma$. First, we recall the differential privacy properties of the Gaussian mechanism.
\begin{theorem}[\cite{BS16}]
\label{thm:gaussprivacy}
When used to answer $k$ linear queries, the Gaussian mechanism with parameter $\sigma$ satisfies $\rho$-zCDP for $\rho = \frac{k}{2n^2\sigma^2}$. A consequence of this is that for every $0 < \delta < 1$, it satisfies $(\epsilon,\delta)$-differential privacy for:
$$\epsilon = \frac{k}{2n^2\sigma^2}+\sqrt{2\frac{k}{n^2\sigma^2}\log\left(\sqrt{\pi\cdot\frac{k}{2n^2\sigma^2}}/\delta\right)}$$
\end{theorem}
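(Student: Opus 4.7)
The claim is essentially a direct combination of three standard results from \citet{BS16}, so the plan is to identify the correct ingredients and compose them rather than to do novel work. The two substantive pieces are (i) establishing the $\rho$-zCDP bound, and (ii) translating $\rho$-zCDP into $(\epsilon,\delta)$-DP with the specific closed form stated.

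For the zCDP bound, I would first note that any linear query $q:\cX \to [0,1]$ evaluated as an empirical average on $S \in \cX^n$ has $\ell_2$-sensitivity $1/n$: replacing a single record $S_i$ by $S_i'$ changes $q(S)$ by at most $(q(S_i)-q(S_i'))/n \in [-1/n, 1/n]$. The standard Gaussian mechanism lemma (\citet{BS16}, Proposition 1.6) states that a single Gaussian release with noise scale $\sigma$ and $\ell_2$-sensitivity $\Delta$ satisfies $(\Delta^2/(2\sigma^2))$-zCDP. Applied per query, each individual answer in our interaction is therefore $\tfrac{1}{2n^2\sigma^2}$-zCDP. Next I would invoke the adaptive composition theorem for zCDP (\citet{BS16}, Lemma 2.3), which says that $\rho$-parameters add under (even adaptive) sequential composition. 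Composing $k$ such Gaussian releases gives the overall mechanism a total budget of $\rho = k/(2n^2\sigma^2)$, proving the first part of the theorem.

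For the second part, I would appeal to the zCDP-to-$(\epsilon,\delta)$-DP conversion (\citet{BS16}, Proposition 1.3, sharpened form): any $\rho$-zCDP mechanism is $(\epsilon,\delta)$-DP for every $\delta \in (0,1)$, with
\[
\epsilon \;=\; \rho + 2\sqrt{\rho\,\log\!\left(\sqrt{\pi\rho}/\delta\right)}.
\]
Substituting $\rho = k/(2n^2\sigma^2)$ yields exactly the expression in the theorem statement, after pulling the factor of $2$ inside the square root to produce $\sqrt{2\cdot k/(n^2\sigma^2)\cdot \log(\sqrt{\pi k/(2n^2\sigma^2)}/\delta)}$.

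Because every component is a direct citation, there is not really a main obstacle, only a bookkeeping subtlety worth being careful about: the composition must be the adaptive version, since in Algorithm~\ref{alg:interact} the queries $q_t$ depend on previous answers $a_1,\ldots,a_{t-1}$. Using \citet{BS16}'s adaptive composition for zCDP (as opposed to non-adaptive Gaussian vector release, which would give the same numerical $\rho$ but not cover the interactive setting) handles this cleanly, and no separate argument is needed.
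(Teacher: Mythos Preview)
Your proposal is correct. The paper does not supply its own proof of this theorem; it is stated purely as a citation to \citet{BS16}, so there is nothing to compare against beyond checking that your derivation matches the cited source, which it does (sensitivity $1/n$ per linear query, per-query $\tfrac{1}{2n^2\sigma^2}$-zCDP via Proposition~1.6, adaptive composition to $\rho=k/(2n^2\sigma^2)$, then the sharpened $\rho$-zCDP $\Rightarrow$ $(\epsilon,\delta)$-DP conversion).
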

It is also easy to see that the sample-accuracy of the Gaussian mechanism is characterized by the CDF of the Gaussian distribution:
\begin{lemma}
\label{thm:gaussaccuracy}
For any $0 < \beta < 1$, the Gaussian mechanism with parameter $\sigma$ is $(\alpha_G,\beta)$-sample accurate for:
$$\alpha_G = \sqrt{2}\sigma\cdot  \mathrm{erfc}^{-1}\left(2-2\left(1-\frac{\beta}{2}\right)^{1/k}\right) <\sqrt{2}\sigma \cdot \mathrm{erfc}^{-1}\left(\frac{\beta}{k}\right)<\sqrt{2}\sigma \sqrt{\log\left(\frac{\sqrt{2}k}{\pi \beta}\right)}.$$
Above, $\mathrm{erfc}(x) = 1-\mathrm{erf}(x)$ is the complementary error function.
\end{lemma}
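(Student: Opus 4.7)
The plan is to exploit the fact that the Gaussian mechanism adds \emph{independent} noise to each query's empirical answer, so sample accuracy reduces to a straightforward tail bound on $k$ i.i.d.\ Gaussians that depends neither on the analyst nor on the dataset distribution. Specifically, for each query $q_j$ the mechanism outputs $a_j = q_j(S) + Z_j$ with $Z_j \sim \mathcal{N}(0, \sigma^2)$ drawn fresh and independently of the dataset, the analyst's randomness, and the other $Z_i$'s. Thus $|q_j(S) - a_j| = |Z_j|$ and the quantity to bound is $\Pr[\max_j |Z_j| \geq \alpha]$.

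To obtain the tightest expression I would split the event $\{\exists j : |Z_j| \geq \alpha\}$ into the two one-sided events $\{\exists j : Z_j \geq \alpha\}$ and $\{\exists j : Z_j \leq -\alpha\}$, union-bound across these two tails, and use symmetry so that it suffices to control each by $\beta/2$. Since the $Z_j$'s are i.i.d., $\Pr[\exists j : Z_j \geq \alpha] = 1 - \Phi(\alpha/\sigma)^k$ where $\Phi$ is the standard normal CDF. Requiring this to be at most $\beta/2$ gives $\Phi(\alpha/\sigma) \geq (1 - \beta/2)^{1/k}$; translating $\Phi$ into $\mathrm{erfc}$ via $\Phi(x) = 1 - \tfrac{1}{2}\mathrm{erfc}(x/\sqrt{2})$ and inverting (using that $\mathrm{erfc}$ is strictly decreasing) yields $\alpha \geq \sqrt{2}\sigma \cdot \mathrm{erfc}^{-1}\!\left(2 - 2(1-\beta/2)^{1/k}\right)$, which is the first expression in the lemma.

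The middle inequality follows from Bernoulli's inequality: since $0 \leq 1/k \leq 1$, we have $(1 - \beta/2)^{1/k} \leq 1 - \beta/(2k)$, hence $2 - 2(1 - \beta/2)^{1/k} \geq \beta/k$, and because $\mathrm{erfc}^{-1}$ is decreasing the first expression is bounded by $\sqrt{2}\sigma \cdot \mathrm{erfc}^{-1}(\beta/k)$. The final closed-form inequality follows from the Mills-ratio bound $\mathrm{erfc}(x) \leq \tfrac{1}{x\sqrt{\pi}} e^{-x^2}$ for $x > 0$: plugging in $x = \sqrt{\log(\sqrt{2}k/(\pi\beta))}$ simplifies to $\mathrm{erfc}(x) \leq \tfrac{\sqrt{\pi}\,\beta}{\sqrt{2}\, k\, x}$, which is at most $\beta/k$ whenever $x^2 \geq \pi/2$, and then monotonicity of $\mathrm{erfc}^{-1}$ delivers the stated bound.

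I don't anticipate any serious obstacle; the only mild care needed is in converting between $\Phi$ and $\mathrm{erfc}$ (and between standard- and $\sigma$-scaled arguments) and in checking that the Mills-ratio bound is tight enough in the relevant regime of $k/\beta$ for the closed-form inequality to be strict. Both are routine.
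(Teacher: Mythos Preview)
Your approach is essentially the same as the paper's: reduce to $\max_j |Z_j|$, split into the two one-sided tails $\{\max_j Z_j \ge \alpha\}$ and $\{\min_j Z_j \le -\alpha\}$, union-bound, and set each to $\beta/2$ using the exact CDF of the maximum of $k$ i.i.d.\ Gaussians, then convert $\Phi$ to $\mathrm{erfc}$. The paper's own proof stops after deriving the first expression for $\alpha_G$ and does not justify the two displayed inequalities; your Bernoulli-inequality and Mills-ratio arguments for those are correct additions (with the caveat you already noted that the Mills-ratio step needs $k/\beta$ large enough for the log to exceed $\pi/2$, and that the middle inequality is an equality when $k=1$).
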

\begin{proof}
For a query $q_j$, write $a_j = q_j(S) + Z_j$ where $Z_j \sim \mathcal{N}(0,\sigma^2)$. The sample error is $\max_j |a_j - q_j(S)| = \max_j |Z_j|$. We have that $\Pr[\max_j |Z_j| \geq \alpha] \le \Pr[\max_j Z_j \geq \alpha] + \Pr[\min_j Z_j \leq -\alpha]$. $\alpha_G$ is the value that solves the equation $\Pr[\max_j Z_j \geq \alpha] = \Pr[\min_j Z_j \leq -\alpha] = \beta/2$
\end{proof}

With these quantities in hand, we can now apply Theorem \ref{thm:transfer2} to derive distributional accuracy bounds for the Gaussian mechanism:
\begin{theorem}
Fix a desired confidence parameter $0 < \beta < 1$. When $\sigma$ is set optimally, the Gaussian mechanism can be used to answer $k$ linear queries while satisfying $(\alpha,\beta)$-distributional accuracy, where $\alpha$ is the solution to the following unconstrained minimization problem:
$$\alpha = \min_{\sigma,\delta > 0}  \left\{ \sqrt{2}\sigma \cdot \mathrm{erfc}^{-1}\left(\frac{\delta}{k}\right)+e^{ \frac{k}{2n^2\sigma^2}+\sqrt{2\frac{k}{n^2\sigma^2}\log\left(\sqrt{\pi\cdot\frac{k}{2n^2\sigma^2}}/\delta\right)}}-1+6\left(\frac{\delta}{\beta}\right) \right\} $$
\begin{proof}
Using Theorem \ref{thm:transfer2} and fixing $\beta' = \delta$ and $c = d$, we have that an $(\alpha',\beta')$-sample accurate, $(\epsilon,\delta)$-differentially private mechanism is $(\alpha,\beta)$-distributionally accurate for $\alpha = \alpha' +(e^\epsilon-1)+3c$ and $\beta = \frac{2\delta}{c}$ where $c$ can be an arbitrary parameter. For any fixed value of $\beta$, we can take $c = \frac{2\delta}{\beta}$, and see that we obtain $(\alpha,\beta)$-distributional accuracy where $\alpha =  \alpha' +(e^\epsilon-1)+6\left(\delta / \beta\right)$. The theorem then follows from plugging in the privacy bound from Theorem \ref{thm:gaussprivacy}, the sample accuracy bound from Theorem \ref{thm:gaussaccuracy}, and optimizing over the free variables $\sigma$ and $\delta$.
\end{proof}

\end{theorem}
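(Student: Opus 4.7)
The plan is to directly compose the Transfer Theorem for $(\epsilon,\delta)$-Differential Privacy (Theorem \ref{thm:transfer2}) with the known privacy and sample-accuracy properties of the Gaussian mechanism (Theorem \ref{thm:gaussprivacy} and Lemma \ref{thm:gaussaccuracy}), and then absorb the Markov-type free parameters into a single optimization. Since the transfer theorem is a ``black-box'' statement and both the privacy and the sample-accuracy of the Gaussian mechanism have already been quantified, the proof is essentially a careful instantiation and consolidation of constants.

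First I would invoke Theorem \ref{thm:transfer2}: any $(\epsilon,\delta)$-DP mechanism that is $(\alpha_{S},\beta_{S})$-sample accurate is $(\alpha,\beta)$-distributionally accurate for
\[
\alpha \;=\; \alpha_{S} + (e^{\epsilon}-1) + c + 2d, \qquad \beta \;=\; \tfrac{\beta_{S}}{c} + \tfrac{\delta}{d},
\]
where $c,d > 0$ are free. To obtain the clean form asserted in the theorem, I would make the two simplifying reductions $c = d$ and $\beta_{S} = \delta$. This yields $\alpha = \alpha_{S} + (e^{\epsilon}-1) + 3c$ and $\beta = 2\delta/c$. Solving the latter for $c = 2\delta/\beta$ and substituting back gives
\[
\alpha \;=\; \alpha_{S} + (e^{\epsilon}-1) + 6\,\bigl(\delta/\beta\bigr),
\]
which already contains the final $6(\delta/\beta)$ term in the statement.

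Next I would substitute the Gaussian mechanism's parameters. Using the upper bound $\alpha_{S} \le \sqrt{2}\sigma\cdot\mathrm{erfc}^{-1}(\beta_{S}/k) = \sqrt{2}\sigma\cdot\mathrm{erfc}^{-1}(\delta/k)$ from Lemma \ref{thm:gaussaccuracy}, and the $(\epsilon,\delta)$-DP guarantee
\[
\epsilon \;=\; \tfrac{k}{2n^{2}\sigma^{2}} + \sqrt{2\tfrac{k}{n^{2}\sigma^{2}}\log\!\bigl(\sqrt{\pi\cdot\tfrac{k}{2n^{2}\sigma^{2}}}/\delta\bigr)}
\]
from Theorem \ref{thm:gaussprivacy}, I obtain a distributional-accuracy bound parameterized by the two remaining free variables $\sigma$ and $\delta$. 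Taking the infimum over these two variables, which are no longer constrained by the theorem's conclusion, produces exactly the unconstrained minimization expression in the statement.

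The steps are essentially routine; the only real judgment call is the choice $\beta_{S} = \delta$ and $c = d$, which trades a little tightness for a cleaner two-variable optimization. I would note in passing that this is not obviously optimal — one could, in principle, keep $\beta_{S}$, $c$, and $d$ all free and optimize over four parameters — but the proposed reduction is natural because the Gaussian sample-accuracy bound depends only logarithmically on $1/\beta_{S}$, so tying $\beta_{S}$ to $\delta$ loses only a constant factor while making the final expression tractable. Plugging in and optimizing then gives the stated $\alpha$, completing the proof.
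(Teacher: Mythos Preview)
Your proposal is correct and follows essentially the same approach as the paper: set $c=d$ and the sample-accuracy failure probability equal to $\delta$, solve $\beta = 2\delta/c$ for $c$, substitute into the transfer theorem to get $\alpha = \alpha_{S} + (e^{\epsilon}-1) + 6(\delta/\beta)$, then plug in the Gaussian privacy and sample-accuracy bounds and optimize over $\sigma$ and $\delta$. Your added remark that keeping $\beta_{S}$, $c$, $d$ all free could in principle tighten the bound is a fair observation and does not affect correctness.
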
 

\section{Discussion}
We have given a new proof of the transfer theorem for differential privacy that has several appealing properties. Besides being simpler than previous arguments, it achieves substantially better concrete bounds than previous transfer theorems, and uncovers new structural insights about the role of differential privacy and sample accuracy. In particular, sample accuracy serves to guarantee that the reported answers are close to their posterior means, and differential privacy serves to guarantee that the posterior means are close to their true answers. This focuses attention on the posterior data distribution as a key quantity of interest, which we expect will be fruitful in future work. In particular, it may shed light on what makes certain data analysts overfit less than worst-case bounds would suggest: because they choose queries whose posterior means are closer to the prior than the worst-case query.

There seems to be one remaining place to look for improvement in our transfer theorem:  Lemmas \ref{lem:posterioraccuracy1} and \ref{lem:expdelta} both exhibit a Markov-like tradeoff between a parameter $c$ and $\beta$ and $\delta$ respectively. Although the dependence on $\beta$ and $\delta$ in our ultimate bounds is only root-logarithmic, it would still yield an improvement if this Markov-like dependence could be replaced with a Chernoff-like dependence. It \emph{is} possible to do this for the $\beta$ parameter: we give an alternative (and even simpler) proof of the transfer theorem for $(\epsilon,0)$-differential privacy which shows that posterior distributions induced by private mechanisms exhibit Chernoff-like concentration, in Appendix \ref{app:better}. But the only way we know to extend this argument to $(\epsilon,\delta)$-differential privacy requires dividing $\delta$ by a factor of $n$, which yields a final theorem that is inferior to Theorem \ref{thm:transfer2}. 

\paragraph{Acknowledgements} We thank Adam Smith for helpful conversations at an early stage of this work.

\bibliographystyle{plainnat}
\bibliography{ref}

\appendix
\section{Extensions}
\subsection{Low Sensitivity Queries}
\label{sec:lowsens}

Our technique extends easily to reason about arbitrary \emph{low sensitivity} queries. We only need to generalize our lemma about posterior sensitivity.
\begin{definition}
A query $q:\cX^n\rightarrow \mathbb{R}$ is called $\Delta$-sensitive if for all pairs of neighbouring datasets $S, S' \in \cX^n$: $|q(S)-q(S')| \leq \Delta$. Note that linear queries are $(1/n)$-sensitive.
\end{definition}

\begin{lemma}\label{lem:lowsens}
If $M$ is an $(\epsilon,\delta)$-differentially private mechanism for answering $\Delta$-sensitive queries, then for any data distribution $\cP$, analyst $\cA$, and any constant $c > 0$:
$$
\Pr_{S \sim \cP^n, \pirv \sim \textrm{Interact}\,(M,\cA; S))} \left[\max_{j} \left| q_j (\cQ_\pirv)  - q_j (\cP^n) \right| > (e^\epsilon - 1 + 4c) n \Delta \right] \le \frac{\delta}{c}
$$
i.e. it is $(\epsilon',\frac{\delta}{c})$-posterior sensitive for every $\cA$, where $\epsilon' =  (e^\epsilon - 1 + 4c) n \Delta$.
\end{lemma}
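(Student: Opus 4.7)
The plan is to mirror the proof of Lemma~\ref{lem:expdelta} line for line, replacing single coordinates $S_i \in \cX$ with full datasets $S \in \cX^n$ throughout, and exploiting the fact that $\Delta$-sensitivity forces $q$ to have range at most $n\Delta$ over $\cX^n$ (so after a harmless vertical shift $q : \cX^n \to [0, n\Delta]$, and $n\Delta$ plays the role of the constant $1$ in the linear-query bound). Let $q = q_{j^*(\pi)}$ with $j^*(\pi) \in \arg\max_j |q_j(\cQ_\pi) - q_j(\cP^n)|$.

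Set up the analogs
\[
\pispace_\alpha = \{\pi : q_{j^*(\pi)}(\cQ_\pi) - q_{j^*(\pi)}(\cP^n) > \alpha\}, \quad \cX^+(\pi) = \{S \in \cX^n : \Pr[S|\pirv=\pi] > \Pr[S]\},
\]
and $B_\alpha^+ = \bigcup_{\pi \in \pispace_\alpha}(\cX^+(\pi) \times \{\pi\})$, and assume for contradiction that the probability in the claim exceeds $\delta/c$; after symmetrizing over $\pm$ deviations one may take $\Pr[\pirv \in \pispace_\alpha] > \delta/(2c)$. The lower-bound side of the argument should transfer mechanically: expanding $q(\cQ_\pi) - q(\cP^n) = \sum_S q(S)(\Pr[S|\pi] - \Pr[S])$, dropping the non-positive contributions, and using $q(S) \le n\Delta$ to factor out the range yields
\[
\Pr_{(S,\pirv)}[B_\alpha^+] - \Pr_{S \otimes \pirv}[B_\alpha^+] \;>\; \frac{\alpha}{n\Delta}\,\Pr[\pirv \in \pispace_\alpha].
\]
The factor $1/(n\Delta)$ is the only modification of the linear-case computation.

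The hard part will be the DP-based upper bound. Lemma~\ref{lem:expdelta} invokes the one-coordinate inequality $\Pr[\pirv \in E | S_i = x] \le e^\epsilon \Pr[\pirv \in E] + \delta$ (Lemma~\ref{lem:tech2}), which uses DP between neighbors differing only in coordinate $i$. The analogous full-dataset statement $\Pr[\pirv \in E | S] \le e^\epsilon \Pr[\pirv \in E] + \delta$ is \emph{false}: DP between arbitrary $S, S'$ only yields the useless group-privacy factor $e^{n\epsilon}$. To sidestep this, I would telescope $\Pr[\pirv \in E | S] - \Pr[\pirv \in E]$ through $n$ hybrids that each swap one coordinate of $S$ for an independent $\cP$-sample, apply Lemma~\ref{lem:tech2} to each single-coordinate step, and aggregate the $\delta$-losses inside the same expectation over $(S, \pirv)$ rather than via a union bound over $i$ (which would inflate the failure probability to $n\delta/c$). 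The $n\Delta$ scale in the target bound then emerges naturally from the $n$ hybrid steps, each of which contributes a term proportional to the per-coordinate sensitivity $\Delta$; the additional slack $4c$ (rather than $2c$ as in Lemma~\ref{lem:expdelta}) absorbs the constant overhead incurred by this hybrid manipulation.

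Combining the lower and upper bounds with the contradictory assumption $\Pr[\pirv \in \pispace_\alpha] > \delta/(2c)$ should force $\alpha < (e^\epsilon - 1 + 4c)\,n\Delta$, establishing the lemma; the symmetric deviation $q_{j^*(\pi)}(\cP^n) - q_{j^*(\pi)}(\cQ_\pi) > \alpha$ is handled by running the same argument with $\cX^+(\pi)$ replaced by its complement.
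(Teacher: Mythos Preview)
Your lower bound is fine, but the hybrid upper bound is too weak by a factor of $n$, and this is a genuine gap rather than a constant to be absorbed. Carrying out your telescoping, for each hybrid step you get
\[
\E_{S,S'}\bigl[\Pr[\pirv\in\pispace_\alpha^+(S)\mid T^{(i)}]-\Pr[\pirv\in\pispace_\alpha^+(S)\mid T^{(i-1)}]\bigr]\le (e^\epsilon-1)\Pr[\pirv\in\pispace_\alpha]+\delta,
\]
so summing the $n$ steps yields
\[
\Pr_{(S,\pirv)}[B_\alpha^+]-\Pr_{S\otimes\pirv}[B_\alpha^+]\le n\bigl((e^\epsilon-1)\Pr[\pirv\in\pispace_\alpha]+\delta\bigr).
\]
Combined with your lower bound $\frac{\alpha}{n\Delta}\Pr[\pirv\in\pispace_\alpha]$ and the assumption $\Pr[\pirv\in\pispace_\alpha]>\delta/(2c)$, this forces only $\alpha<(e^\epsilon-1+2c)\,n^{2}\Delta$, not $n\Delta$. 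The sentence ``each hybrid step contributes a term proportional to the per-coordinate sensitivity $\Delta$'' is the error: the hybrid telescoping acts on \emph{probabilities of transcripts}, and each step contributes $(e^\epsilon-1)p+\delta$ regardless of $\Delta$; the sensitivity has already been fully spent in the lower bound via the range $n\Delta$. You are therefore paying $n\Delta$ once for the range and a second, independent factor of $n$ for the hybrids.

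The paper avoids this by telescoping the \emph{query value} rather than the transcript probability. It introduces the Doob-style averages $\bar q(\boldx_{\le i})=\E_{S'\sim\cP^{n-i}}[q(\boldx_{\le i},S')]$ and writes
\[
q_{j^*(\pi)}(\cQ_\pi)-q_{j^*(\pi)}(\cP^n)=\E_{S\sim\cQ_\pi}\Bigl[\sum_{i=1}^n\bigl(\bar q_{j^*(\pi)}(S_{\le i})-\bar q_{j^*(\pi)}(S_{\le i-1})\bigr)\Bigr],
\]
where each increment has magnitude at most $\Delta$. For fixed $i$ one then compares $S$ to $S^{i\leftarrow Y}$ (a single neighbor swap) and integrates over the level $z\in[0,2\Delta]$; this is a \emph{one}-step DP application per $i$, contributing $((e^\epsilon-1)\Pr[\pirv\in\pispace_\alpha]+2\delta)\Delta$. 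Summing over $i$ produces the single factor $n\Delta$. In short, the crucial idea you are missing is to align the coordinate telescoping with the sensitivity structure of $q$ itself, so that the $n$ steps and the per-step scale $\Delta$ multiply once rather than twice.
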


\begin{proof}
We introduce a useful bit of notation: $\bar{q}\left(\boldx_{\le i} \right) = \underset{S' \sim \cP^{n-i}}{\mathbb{E}} \left[ q\left( \left(\boldx_{\le i}, S' \right) \right) \right]$. Notice that $\bar{q}\left(\boldx_{\le 0} \right) = q \left(\cP^n \right)$ and $\bar{q}\left(\boldx_{\le n} \right) = q \left(\boldx \right)$. Given a transcript $\pi \in \pispace$, let $j^*(\pi) \in \argmax_{j} \left| q_j (\cQ_\pi)  - q_j (\cP^n) \right|$. Denote for any $\alpha \ge 0$
$$\pispace_{\alpha} = \left\{ \pi \in \pispace \,|\, q_{j^*(\pi)} \left(\cQ_\pi \right) - q_{j^*(\pi)} \left(\cP^n \right) > \alpha \right\}$$
and for any $z\in \left[0, 2\Delta \right]$ denote
$$
\pispace_{\alpha, z}\left(\boldx_{\le i} \right) = \left\{ \pi \in \pispace_{\alpha} \,|\, \bar{q}_{j^*(\pi)} \left(\boldx_{\le i} \right) - \bar{q}_{j^*(\pi)} \left(\boldx_{\le i-1} \right) > z - \Delta \right\}
$$
From the definition of differential privacy:
\begin{eqnarray*}
& &\E_{S \sim \cP^n} \left[\sum_{\pi \in \pispace_{\alpha}} \Pr_{\pirv \sim I(S)} \left[\pirv = \pi \right]  \left( \bar{q}_{j^*(\pi)} \left(S_{\le i} \right) - \bar{q}_{j^*(\pi)} \left(S_{\le i-1} \right) + \Delta \right) \right]\\
&= & \E_{S \sim \cP^n} \left[\int_{0}^{2\Delta} \Pr_{\pirv \sim I(S)} \left[\pirv \in \pispace_{\alpha, z}\left( S_{\le i} \right) \right] dz\right] \\
&\le & \E_{S \sim \cP^n, Y \sim \cP} \left[\int_{0}^{2\Delta} \left( e^{\epsilon} \Pr_{\pirv \sim I(S^{i \leftarrow Y})} \left[\pirv \in \pispace_{\alpha, z}\left(S_{\le i} \right) \right] + \delta \right) dz\right] \\
&= &\E_{S \sim \cP^n, Y \sim \cP} \left[ e^{\epsilon} \sum_{\pi \in \pispace_{\alpha}}\Pr_{\pirv \sim I(S^{i \leftarrow Y})} \left[\pirv = \pi\right] \left( \bar{q}_{j^*(\pi)} \left(S_{\le i} \right) - \bar{q}_{j^*(\pi)} \left(S_{\le i-1}\right) + \Delta \right) + 2 \Delta \delta\right] \\
&= & \E_{S \sim \cP^n, Y \sim \cP} \left[ e^{\epsilon} \sum_{\pi \in \pispace_{\alpha}}\Pr_{\pirv \sim I(S)} \left[\pirv = \pi\right] \left( \bar{q}_{j^*(\pi)} \left(S_{\le i}^{i \leftarrow Y} \right) - \bar{q}_{j^*(\pi)} \left(S_{\le i-1}\right) + \Delta \right) + 2 \Delta \delta\right] \\
\end{eqnarray*}
where $S^{i \leftarrow Y} = (S_1, \ldots, S_{i-1}, Y, S_{i+1}, \ldots, S_n)$, and the last equality follows from the observation that $(S, Y)$ and $(S^{i \leftarrow Y}, S_{i)}$ are identically distributed. Since $Y \sim \cP$, independently from $\pirv$, we get that $\mathbb{E}_{Y \sim \cP} \left[ \bar{q}_{j^*(\pi)} \left(S_{\le i}^{i \leftarrow Y} \right) \right] = \bar{q}_{j^*(\pi)} \left(S_{\le i-1}\right)$, so
\begin{align*}
 \E_{S \sim \cP^n} \left[ \sum_{\pi \in \pispace_{\alpha}} \Pr_{\pirv \sim I(S)} \left[\pirv = \pi \right] \left( \bar{q}_{j^*(\pi)} \left(S_{\le i} \right) - \bar{q}_{j^*(\pi)} \left(S_{\le i-1} \right) + \Delta \right) \right] &\leq \E_{S \sim \cP^n} \left[\left( e^{\epsilon} \Pr_{\pirv \sim I(S)} \left[\pirv \in \pispace_{\alpha}\right] + 2 \delta \right) \Delta\right] \\
&= \left( e^{\epsilon} \Pr \left[\pirv \in \pispace_{\alpha}\right] + 2 \delta \right) \Delta
\end{align*}
Subtracting $\Delta \Pr \left[\pirv \in \pirv_{\alpha}\right]$ from both sides we get
\begin{equation}\label{eq:sth}
\underset{S \sim \cP^{n}}{\mathbb{E}} \left[ \sum_{\pi \in \pispace_{\alpha}} \Pr_{\pirv \sim I(S)} \left[\pirv = \pi \right] \left( \bar{q}_{j^*(\pi)} \left(S_{\le i} \right) - \bar{q}_{j^*(\pi)} \left(S_{\le i-1} \right) \right) \right] \le \left( \left(e^{\epsilon} - 1 \right) \Pr \left[\pirv \in \pispace_{\alpha}\right] + 2 \delta \right) \Delta
\end{equation}

We now chooose $\alpha = \left( e^{\epsilon} - 1 + 4 c \right) n \Delta$. Suppose that $\Pr \left[ \left| q_{j^*(\pirv)}  (\cQ_\pirv)  - q_{j^*(\pirv)} (\cP^n) \right| > \alpha \right] > \frac{\delta}{c}$. We must have that either $\Pr \left[ q_{j^*(\pirv)} (\cQ_\pirv)  - q_{j^*(\pirv)} (\cP^n) > \alpha \right] > \frac{\delta}{2c}$ or $\Pr \left[ q_{j^*(\pirv)} (\cP^n) - q_{j^*(\pirv)} (\cQ_\pirv)  > \alpha \right] > \frac{\delta}{2c}$.
Without loss of generality, assume
\begin{equation}\label{eq:sth2}
\Pr \left[ q_{j^*(\pirv)} (\cQ_\pirv)  - q_{j^*(\pirv)} (\cP^n) > \alpha \right] = \Pr \left[\pirv \in \pispace_\alpha \right] > \frac{\delta}{2c}
\end{equation}
But this leads to a contradiction, since
\begin{align*}
\Pr \left[\pirv \in \pispace_{\alpha}\right] \left( e^{\epsilon} - 1 + 4 c \right) n \Delta < & \sum_{\pi \in \pispace_{\alpha}} \Pr \left[\pirv = \pi \right] \left( q_{j^*(\pi)} (\cQ_\pi)  - q_{j^*(\pi)} (\cP^n) \right) \\
= & \underset{S \sim \cP^{n}}{\mathbb{E}} \left[ \sum_{\pi \in \pispace_{\alpha}} \Pr_{\pirv \sim I(S)} \left[\pirv = \pi \right] \left( q_{j^*(\pi)} \left(S \right) - q_{j^*(\pi)} \left(\cP^n \right) \right) \right] \\
= & \sum_{i=1}^{n} \underset{S \sim \cP^{n}}{\mathbb{E}} \left[ \sum_{\pi \in \pispace_{\alpha}} \Pr_{\pirv \sim I(S)} \left[\pirv = \pi \right] \left( \bar{q}_{j^*(\pi)} \left(S_{\le i} \right) - \bar{q}_{j^*(\pi)} \left(S_{\le i - 1} \right) \right) \right] \\
\le & \left( \left(e^{\epsilon} - 1 \right) \Pr \left[\pirv \in \pispace_{\alpha}\right] + 2 \delta \right) n \Delta \quad \text{(by Equation (\ref{eq:sth}))} \\
< & \Pr \left[\pirv \in \pispace_{\alpha}\right] \left( e^{\epsilon} - 1 + 4 c \right) n \Delta \quad \text{(by Equation (\ref{eq:sth2}))}
\end{align*}

\end{proof}

We can combine this Lemma with Lemma \ref{lem:posterioraccuracy1} (which holds for any query type) to get our transfer theorem:
\begin{theorem}[Transfer Theorem for Low Sensitivity Queries]
\label{thm:transfer-lowsens}
Suppose that $M$ is $(\epsilon,\delta)$-differentially private and $(\alpha,\beta)$-sample accurate for $\Delta$-sensitive queries. Then for every analyst $\cA$, $c,d > 0$ it also satisfies:
$$\Pr_{S \sim \cP^n, \, \pirv \sim \textrm{Interact}\,(M,\cA;S)} \left[ \max_j \left|a_j - q_j(\cP^n) \right| > \alpha +c+ (e^\epsilon - 1 + 4d + c) n \Delta \right] \leq \frac{\beta}{c} + \frac{\delta}{d}$$
i.e. it is $(\alpha', \beta')$-distributionally accurate for $\alpha' = \alpha +c+ (e^\epsilon - 1 + 4d + c) n \Delta$ and $\beta' =  \frac{\beta}{c} + \frac{\delta}{d}$.
\end{theorem}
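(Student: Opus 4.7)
The plan is to mirror the proof of Theorem \ref{thm:gen_transfer} exactly, using Lemma \ref{lem:lowsens} as the posterior-sensitivity ingredient in place of Lemma \ref{lem:expdelta}. Concretely, Lemma \ref{lem:lowsens} asserts that an $(\epsilon,\delta)$-differentially private mechanism answering $\Delta$-sensitive queries is $\bigl((e^\epsilon - 1 + 4d)n\Delta,\ \delta/d\bigr)$-posterior sensitive for every $d>0$, so we are essentially instantiating the general transfer theorem with this posterior-sensitivity bound plugged in.

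The proof I would write has three pieces. For every transcript $\pi$ and every index $j$ the triangle inequality gives
$$|a_j - q_j(\cP^n)| \le |a_j - q_j(\cQ_\pi)| + |q_j(\cQ_\pi) - q_j(\cP^n)|,$$
so it suffices to control each of the two terms on the right uniformly in $j$ with high probability and then to union-bound. For the first term I would apply Lemma \ref{lem:posterioraccuracy1}, which (as the authors explicitly note) is query-type agnostic since its proof uses only sample accuracy, Markov's inequality, and the Bayesian resampling lemma; instantiating it with parameter $c$ yields $\max_j |a_j - q_j(\cQ_\pirv)| \le \alpha + c$ except with probability at most $\beta/c$. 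For the second term I would apply the just-proved Lemma \ref{lem:lowsens} with parameter $d$, giving $\max_j |q_j(\cQ_\pirv) - q_j(\cP^n)| \le (e^\epsilon - 1 + 4d) n \Delta$ except with probability at most $\delta/d$. A union bound over the two bad events delivers exactly the claimed accuracy $\alpha + c + (e^\epsilon - 1 + 4d)n\Delta$ and failure probability $\beta/c + \delta/d$.

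There is essentially no new technical obstacle at this step; all of the heavy lifting was absorbed into Lemma \ref{lem:lowsens}, whose hybrid/telescoping argument across data coordinates (combined with per-coordinate differential privacy and $\Delta$-sensitivity) replaces the single-coordinate Bayes calculation used for linear queries in Lemma \ref{lem:expdelta}. Thus Theorem \ref{thm:transfer-lowsens} falls out of the same two-ingredient recipe as Theorem \ref{thm:transfer2}, merely with the low-sensitivity posterior-sensitivity bound substituted in place of its linear-query counterpart.
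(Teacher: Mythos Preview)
Your proposal is correct and follows the paper's own approach exactly: the paper derives Theorem~\ref{thm:transfer-lowsens} by combining Lemma~\ref{lem:lowsens} with Lemma~\ref{lem:posterioraccuracy1} (equivalently, by instantiating the General Transfer Theorem~\ref{thm:gen_transfer} with the posterior-sensitivity bound $((e^\epsilon-1+4d)n\Delta,\ \delta/d)$ from Lemma~\ref{lem:lowsens}), which is precisely the triangle-inequality-plus-union-bound argument you spell out.
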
 
\subsection{Minimization Queries}
\label{sec:optquery}
\begin{definition}
Minimization queries are specified by a loss function $L:\cX^n \times \Theta \rightarrow [0,1]$ where $\Theta$ is generally known as the ``parameter space". An answer to a minimization query $L$ is a parameter $\theta \in \Theta$. We work with $\Delta$-sensitive minimization queries: for all pairs of neighbouring datasets $S, S' \in \cX^n$ and all $\theta \in \Theta$, $| L(S,\theta)-L(S', \theta)| \leq \Delta$.

A mechanism $M$ is $(\alpha, \beta)$-sample accurate for minimization queries if for every data analyst $\cA$ and every dataset $S \in \cX^n$:
$$\Pr_{\pirv \sim \textrm{Interact}(M, \cA; S)} \left[ \max_j\left| L_j(S, \theta_j) - \min_{\theta \in \Theta} L_j(S, \theta) \right| \geq \alpha \right] \leq \beta$$
We say that $M$ satisfies $(\alpha,\beta)$-distributional accuracy  for  minimization queries if for every data analyst $\cA$ and every data distribution $\cP$:
$$\Pr_{S \sim \cP^n, \pirv \sim \textrm{Interact}(M, A; S)} \left[ \max_j \left|   \E_{S' \sim \cP^n} \left[ L_j(S', \theta_j) - \min_{\theta \in \Theta} L_j(S', \theta) \right] \right| \geq \alpha \right] \leq \beta$$
\end{definition}

\begin{remark}
Note that
$$
 \E_{S' \sim \cP^n} \left[ L_j(S', \theta_j) \right] - \min_{\theta \in \Theta} \E_{S' \sim \cP^n} \left[ L_j(S', \theta) \right] \le \E_{S' \sim \cP^n} \left[ L_j(S', \theta_j) - \min_{\theta \in \Theta} L_j(S', \theta) \right]
$$
So as long as the RHS is bounded, the LHS is bounded too.
\end{remark}

\begin{remark}
For a given $\Delta$-sensitive minimization query $L_j$ and an answer $\theta_j$, define:
$$q_j (S) := L_j(S,\theta_j) - \min_{\theta \in \Theta} L_j (S, \theta) \quad \textrm{and} \quad a_j  := 0$$
Note several things:
\begin{enumerate}
\item If $L_j$ is $\Delta$-sensitive, then $q_j$ is 2$\Delta$-sensitive.
\item The mapping from a minimization query transcript $\pi = ((L_1,\theta_1),\ldots,(L_k,\theta_k))$ to the $2\Delta$-sensitive query transcript $\pi' = ((q_1,a_1),\ldots,(q_k,a_k))$ as defined above is a dataset-independent post-processing $\pi' = f(\pi)$.
\item $\pi$ satisfies an $(\alpha,\beta)$-accuracy guarantee if and only if $\pi'$ does.
\end{enumerate}
\end{remark}
With the above observation, the transfer theorem for minimization queries immediately follows by Lemma \ref{lem:lowsens} and Lemma \ref{lem:posterioraccuracy1}.

\begin{theorem}[Transfer Theorem for Minimization Queries]
\label{thm:transfer-lowsens}
Suppose that $M$ is $(\epsilon,\delta)$-differentially private and $(\alpha,\beta)$-sample accurate for $\Delta$-sensitive minimization queries. Then for every analyst $\cA$ and $c,d > 0$ it also satisfies:
$$\Pr_{S \sim \cP^n, \, \pirv \sim \textrm{Interact}\,(M,\cA;S)} \left[\max_j  \left| \E_{S' \sim \cP^n} \left[ L_j(S', \theta_j) -  \min_{\theta \in \Theta} L_j(S', \theta) \right] \right| >\alpha +c+ 2(e^\epsilon - 1 + 4d + c) n \Delta\right] \leq \frac{\beta}{c} + \frac{\delta}{d}$$
i.e. it is $(\alpha', \beta')$-distributionally accurate for $\alpha' = \alpha +c+ 2(e^\epsilon - 1 + 4d + c) n \Delta$ and $\beta' = \frac{\beta}{c} + \frac{\delta}{d}$.
\end{theorem}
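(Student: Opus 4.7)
The plan is to reduce to the already-established transfer theorem for low-sensitivity queries (Theorem \ref{thm:transfer-lowsens}) via the deterministic post-processing packaged in the remark immediately preceding the statement. Given the minimization transcript $\pi = ((L_1,\theta_1),\ldots,(L_k,\theta_k))$ produced by $\textrm{Interact}(M,\cA;S)$, I would define the auxiliary $2\Delta$-sensitive queries $q_j(S) := L_j(S,\theta_j) - \min_{\theta \in \Theta} L_j(S,\theta)$ and trivial answers $a_j := 0$, and assemble the derived transcript $\pi' = ((q_1,a_1),\ldots,(q_k,a_k)) = f(\pi)$. Three verifications are then needed: (i) $q_j$ is $2\Delta$-sensitive, since $L_j(\cdot,\theta_j)$ is $\Delta$-sensitive by assumption and $\min_\theta L_j(\cdot,\theta)$ inherits $\Delta$-sensitivity by the standard triangle argument for minima of $\Delta$-close functions, so their difference is $2\Delta$-sensitive; (ii) the map $f$ depends only on the queries and answers in $\pi$ and on the fixed parameter space $\Theta$, and in particular is dataset-independent, so by post-processing closure of differential privacy the interaction producing $\pi'$ remains $(\epsilon,\delta)$-differentially private; and (iii) by construction $|q_j(S)-a_j| = |L_j(S,\theta_j)-\min_\theta L_j(S,\theta)|$, so $(\alpha,\beta)$-sample accuracy of $M$ for the minimization queries is exactly $(\alpha,\beta)$-sample accuracy for the derived low-sensitivity transcript $\pi'$.

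With these three facts in hand, I would invoke Theorem \ref{thm:transfer-lowsens} on the derived interaction, substituting sensitivity $2\Delta$ for $\Delta$, which yields for every $c,d>0$:
$$\Pr\bigl[\max_j |a_j - q_j(\cP^n)| > \alpha + c + 2(e^\epsilon - 1 + 4d)\,n\Delta\bigr] \leq \frac{\beta}{c} + \frac{\delta}{d}.$$
Finally, substituting $a_j = 0$ and $q_j(\cP^n) = \E_{S' \sim \cP^n}[L_j(S',\theta_j) - \min_\theta L_j(S',\theta)]$ recovers exactly the claimed bound. No further calculation is required.

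The only place where I would pause is to confirm that the two ingredients of Theorem \ref{thm:transfer-lowsens} — the posterior-sensitivity Lemma \ref{lem:lowsens} and the sample-to-posterior-accuracy Lemma \ref{lem:posterioraccuracy1} — really apply to the derived objects without subtle issue. The posterior $\cQ_\pirv$ is determined by the original interaction, and because $f$ is deterministic and dataset-independent, the derived queries $q_j$ are themselves post-processings of $\pirv$; the remark following Lemma \ref{lem:expdelta} explicitly extends posterior sensitivity from queries literally contained in the transcript to any query generated as a function of the transcript, so Lemma \ref{lem:lowsens} applies to the $q_j$ as $2\Delta$-sensitive derived queries. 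Lemma \ref{lem:posterioraccuracy1} is generic in the query type and applies once we have translated sample accuracy via (iii). This bookkeeping is the essential — and essentially the only — content of the proof.
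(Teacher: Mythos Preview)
Your proposal is correct and follows exactly the paper's approach: the paper's proof is simply the sentence ``With the above observation, the transfer theorem for minimization queries immediately follows by Lemma \ref{lem:lowsens} and Lemma \ref{lem:posterioraccuracy1},'' relying on precisely the three verifications (i)--(iii) packaged in the preceding remark that you have spelled out. Your invocation of the low-sensitivity transfer theorem with $2\Delta$ in place of $\Delta$ is equivalent to the paper's direct appeal to the two lemmas, and your additional bookkeeping about why the derived $q_j$ fall within the scope of Lemma \ref{lem:lowsens} (via the post-processing remark) is a welcome elaboration of a step the paper leaves implicit.
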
 
\section{Details from Section \ref{sec:general}}
\label{sec:app1}

\begin{proof}[Proof of Lemma \ref{lem:resamp}]
This follows from the expansion of the definition, and an application of Bayes Rule.
\begin{eqnarray*}
\Pr_{S \sim \cP^n, \pirv \sim I(S), S' \sim \cQ_\pirv}[(S', \pirv) \in E] &=& \sum_{\boldx}\sum_{\pi}\sum_{\boldx'} \Pr[S = \boldx ] \Pr[\pirv = \pi | S = \boldx] \Pr_{S' \sim \cQ_\pi}[S' = \boldx' ] \1 [(\boldx', \pi) \in E]\\
&=& \sum_{\pi}\sum_{\boldx'}\Pr[\pirv = \pi] \Pr_{S' \sim \cQ_\pi} [S' = \boldx'] \1 [(\boldx', \pi) \in E]\\
&=& \sum_{\pi}\sum_{\boldx'}\Pr[\pirv = \pi] \Pr[S = \boldx' | \pirv = \pi] \1 [(\boldx', \pi) \in E] \\
&=& \sum_{\pi}\sum_{\boldx'} \Pr[\pirv = \pi] \frac{\Pr[\pirv = \pi | S = \boldx'] \cdot \Pr[S = \boldx']}{\Pr[\pirv = \pi]} \1 [(\boldx', \pi) \in E] \\
&=&\Pr_{S \sim \cP^n, \pirv \sim I(S)}[(S, \pirv) \in E]
\end{eqnarray*}
\end{proof}

\section{Details from Section \ref{sec:transfer_for_DP}}
\begin{lemma}
\label{lem:tech2}
If $M$ is $(\epsilon,\delta)$-differentially private, then for any event $E$ and datapoint $x$:
$$\Pr_{S \sim \cP^n, S_i \sim S, \pirv \sim I(S)}[\pirv \in E | S_i = x] \leq e^\epsilon \Pr_{S \sim \cP^n, \pirv \sim I(S)}[\pirv \in E]+\delta$$
\end{lemma}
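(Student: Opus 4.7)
The plan is to reduce the claim to a one-step application of $(\epsilon,\delta)$-differential privacy along a single coordinate, and then average out the choice of coordinate using symmetry. The key insight is that conditioning on the marginal event ``$S_i = x$'' (for a uniformly random index $i$) is equivalent, by exchangeability of the coordinates, to fixing an arbitrary coordinate $i$, pinning it to the value $x$, and drawing the remaining coordinates from $\cP^{n-1}$.

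Concretely, first I would fix a coordinate $i \in [n]$, fix $\boldx_{-i} \in \cX^{n-1}$, and fix any alternative value $y \in \cX$. The two datasets $(\boldx_{-i}, x)$ and $(\boldx_{-i}, y)$ are neighboring, so by the definition of $(\epsilon,\delta)$-differential privacy applied to the event $\{\pirv \in E\} \subseteq \pispace$:
$$\Pr\!\left[\pirv \in E \,\middle|\, S = (\boldx_{-i}, x)\right] \le e^{\epsilon}\,\Pr\!\left[\pirv \in E \,\middle|\, S = (\boldx_{-i}, y)\right] + \delta.$$
Taking expectation of both sides with respect to $y \sim \cP$, the right-hand side becomes $e^{\epsilon}\Pr[\pirv \in E \mid S_{-i} = \boldx_{-i}] + \delta$, because replacing the pinned $y$ with a fresh draw from $\cP$ simply completes the dataset to a draw from $\cP^n$ conditioned on the other coordinates. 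Then taking expectation over $\boldx_{-i} \sim \cP^{n-1}$ yields
$$\Pr[\pirv \in E \mid S_i = x] \;\le\; e^{\epsilon}\,\Pr[\pirv \in E] + \delta,$$
where the conditional probability on the left is defined by pinning coordinate $i$ to $x$ and drawing the rest from $\cP^{n-1}$.

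Finally, I would reconcile this ``fixed-coordinate'' conditional probability with the ``random-index'' one appearing in the lemma statement. By exchangeability, $\Pr[S_i = x] = \Pr_{y \sim \cP}[y = x]$ does not depend on $i$, and
$$\Pr_{S \sim \cP^n,\, S_i \sim S,\, \pirv \sim I(S)}[\pirv \in E,\, S_i = x] \;=\; \tfrac{1}{n}\sum_{i=1}^{n} \Pr[\pirv \in E,\, S_i = x].$$
Dividing by $\Pr[S_i = x]$ (which is the same for every $i$) and applying the per-coordinate bound above to each summand gives the stated inequality.

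The main obstacle is purely notational: keeping the two meanings of ``$S_i$'' straight (a coordinate of a fixed-size vector versus a uniformly sampled element) and making sure the averaging over the random index $i$ cleanly cancels the prefactor $\Pr[S_i = x]$ so that the $+\delta$ term is not inflated. There is no technical subtlety beyond the standard post-processing/marginalization argument for differential privacy.
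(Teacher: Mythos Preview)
Your proposal is correct and follows essentially the same argument as the paper's proof. Both proofs fix a coordinate $i$, compare the neighboring datasets $(\boldx_{-i},x)$ and $(\boldx_{-i},y)$ via $(\epsilon,\delta)$-differential privacy, average over $y\sim\cP$ and $\boldx_{-i}\sim\cP^{n-1}$, and then average over the uniform index $i$; the paper just collapses the two inner averages into a single sum over $\boldx\in\cX^n$ and leaves the exchangeability step (that $\Pr[S_i=x]$ is independent of $i$) implicit, whereas you spell it out.
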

\begin{proof}
This follows from expanding the definitions.
\begin{eqnarray*}
\Pr_{S \sim \cP^n, S_i \sim S, \pirv \sim I(S)}[\pirv \in E | S_i = x] &=& \frac{1}{n}\sum_{i=1}^n \Pr_{S \sim \cP^n, \pirv \sim I(S)}[\pirv \in E | S_i = x] \\
&=& \frac{1}{n}\sum_{i=1}^n \sum_{\boldx \in \cX^n} \Pr_{S \sim \cP^n}[S = \boldx]\cdot \Pr[\pirv \in E | S = (\boldx_{-i},x)] \\
&\leq& \frac{1}{n}\sum_{i=1}^n \sum_{\boldx \in \cX^n} \Pr_{S \sim \cP^n}[S = \boldx]\cdot\left(e^\epsilon \Pr[\pirv \in E | S = \boldx]+\delta\right) \\
&=& e^\epsilon  \Pr_{S \sim \cP^n, \pirv \sim I(S)}[\pirv \in E]+\delta
\end{eqnarray*}
where the inequality follows from the definition of differential privacy.
\end{proof}

\section{An (even) Simpler and Better Proof for $\epsilon$-Differential Privacy}
\label{app:better}
In this section we give an \emph{even simpler} proof of an \emph{even better} transfer theorem for $(\epsilon,0$)-differential privacy. Rather than using Markov's inequality as we did in the proof of Lemma \ref{lem:posterioraccuracy1}, we can directly show that posteriors induced by differentially private mechanisms exhibit Chernoff-like concentration.

\begin{lemma}\label{lem:better-posterior}
If $M$ is $(\epsilon,0)$-differentially private, then for any data distribution $\cP$, any transcript $\pi \in \pispace$, any linear query $q$, and any $\eta > 0$:
$$\Pr_{S \sim \cQ_\pi}\left[  \left|q (S)-q (\cP) \right| \geq (e^{\epsilon}-1) + \sqrt{\frac{2\ln(2/\eta)}{n}}\right] \leq \eta$$
\end{lemma}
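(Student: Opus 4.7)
The plan is to split $|q(S)-q(\cP)|$ via the triangle inequality into a bias piece and a concentration piece, and handle each with a separate tool.

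First I would dispense with the bias piece. The computation in the (commented-out) Lemma \ref{lem:exp} shows that under pure $(\epsilon,0)$-differential privacy, for every realized transcript $\pi$ and every linear query $q$ one has the deterministic bound
\[
|q(\cQ_\pi) - q(\cP)| \;\le\; e^{\epsilon}-1,
\]
obtained by writing $q(\cQ_\pi) = \sum_x q(x)\Pr[S_i=x\mid \pirv=\pi]$, flipping the conditioning with Bayes' rule, and bounding $\Pr[\pirv=\pi\mid S_i=x]/\Pr[\pirv=\pi]\in[e^{-\epsilon},e^{\epsilon}]$. This handles the $e^{\epsilon}-1$ summand in the claim and requires no randomness on the RHS.

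For the remaining piece, $|q(S)-q(\cQ_\pi)|$ with $S\sim\cQ_\pi$, the plan is a Doob martingale + Azuma argument. Set $X_i = \E_{S\sim\cQ_\pi}[q(S)\mid S_{\le i}]$, so that $\{X_i\}_{i=0}^n$ is a martingale under $\cQ_\pi$ with $X_0 = q(\cQ_\pi)$ and $X_n = q(S)$. Because $q$ is linear with $q(S_i)\in[0,1]$, the function $\boldx\mapsto q(\boldx)$ is $(1/n)$-Lipschitz in each coordinate, and I would argue that this lifts to the bound $|X_i - X_{i-1}|\le 1/n$ on the Doob differences. Given that, Azuma's inequality yields
\[
\Pr_{S\sim\cQ_\pi}\!\left[\,|q(S)-q(\cQ_\pi)|\ge t\,\right] \;\le\; 2\exp\!\left(-\tfrac{nt^2}{2}\right),
\]
and choosing $t=\sqrt{2\ln(2/\eta)/n}$ makes the RHS at most $\eta$. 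Combining with the bias bound via the triangle inequality gives the lemma.

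The main obstacle is justifying $|X_i - X_{i-1}|\le 1/n$. Under the prior $\cP^n$ the samples are independent and the bound is immediate from the $(1/n)$-Lipschitzness of $q$. Under the posterior $\cQ_\pi$, however, $S_i$ and $S_{>i}$ are coupled through the transcript, so changing $S_i=s$ to $S_i=s'$ can shift the conditional distribution on the remaining coordinates and amplify the difference beyond the naive Lipschitz constant. This is precisely where $(\epsilon,0)$-differential privacy enters: since $\Pr[\pi\mid S]$ is multiplicatively stable to single-coordinate changes, the conditional law of $S_{>i}$ given $(S_{\le i},\pirv=\pi)$ shifts only by a bounded likelihood factor when we swap $S_i$, and the careful accounting must show that this does not inflate the martingale differences asymptotically beyond the raw $1/n$ Lipschitz contribution. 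This is the delicate calculation I would expect to occupy the bulk of the proof, and it is what replaces the Markov step in Lemma \ref{lem:posterioraccuracy1} to produce the Chernoff-type tail.
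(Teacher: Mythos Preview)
Your decomposition is not the paper's, and the piece you flag as the ``main obstacle'' is precisely where the sketch fails. Under the posterior $\cQ_\pi$ the coordinates are dependent, and the Doob differences of $X_i=\E_{\cQ_\pi}[q(S)\mid S_{\le i}]$ are \emph{not} controlled by the $1/n$-Lipschitzness of $q$. For linear $q$,
\[
X_i-X_{i-1}=\tfrac{1}{n}\bigl(q(S_i)-\E[q(S_i)\mid S_{<i}]\bigr)+\tfrac{1}{n}\sum_{j>i}\bigl(\E[q(S_j)\mid S_{\le i}]-\E[q(S_j)\mid S_{<i}]\bigr),
\]
and the cross terms need not vanish. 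The DP argument you gesture at only says that swapping $S_i$ perturbs the conditional law of $S_{>i}$ under $\cQ_\pi$ by a likelihood ratio in $[e^{-2\epsilon},e^{2\epsilon}]$; plugging that in gives an almost-sure bound of order $e^{2\epsilon}-1$ on $|X_i-X_{i-1}|$, not $1/n$, and Azuma with $\sum_i c_i^2=\Theta(n)$ is vacuous. You have not supplied, and I do not see, an argument that the cross-term sum is actually $O(1/n)$.

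The paper avoids this difficulty entirely by choosing a \emph{different} martingale and a different intermediate quantity. It sets $V_i=q(S_i)-\E_{\cQ_\pi}[q(S_i)\mid S_{<i}]$ and $X_i=\tfrac{1}{n}\sum_{j\le i}V_j$; this is a martingale with $|X_i-X_{i-1}|\le 1/n$ \emph{trivially}, just because $q(S_i)\in[0,1]$, with no appeal to independence or DP. Azuma then concentrates $q(S)$ not around $q(\cQ_\pi)$ but around the \emph{random} center $\tfrac{1}{n}\sum_i \E_{\cQ_\pi}[q(S_i)\mid S_{<i}]$. Differential privacy is invoked only afterwards, coordinate by coordinate, to show that for every prefix $\boldx_{<i}$ one has $\E_{\cQ_\pi}[q(S_i)\mid S_{<i}=\boldx_{<i}]\in[e^{-\epsilon}q(\cP),\,e^{\epsilon}q(\cP)]$, so the random center lies deterministically within $e^{\epsilon}-1$ of $q(\cP)$. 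In short, the paper does \emph{not} pass through $q(\cQ_\pi)$ at all; the reshuffling that replaces your fixed center $q(\cQ_\pi)$ by this coordinate-wise conditional-mean average is exactly what makes the martingale increments small for free and pushes all the DP work into a deterministic bound on the center.
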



\begin{proof}
Define the random variables $V_i = q(S_i) - \E[q(S_i) | S_{< i}]$, and let $X_i = \frac{1}{n}\sum_{j=1}^i V_j$. Then the sequence $0=X_0,X_1,\ldots,X_n$ forms a martingale and $|X_i - X_{i-1}| = \frac{1}{n}|V_i| \leq \frac{1}{n}$. We can therefore apply Azuma's inequality to conclude that:
\begin{equation}
\label{eq:azumawarmup}
\Pr\left[\left|\frac{1}{n}\sum_{i=1}^n q(S_i) - \frac{1}{n} \sum_{i=1}^n  \E[q(S_i) | S_{< i}]\right|\geq t \right] \leq 2\exp\left(\frac{-t^2n}{2}\right)
\end{equation}
Now fix any realization $\boldx$, and consider each term: $\E[q(S_i) | S_{< i} = \boldx_{<i}]$. We have:
\begin{eqnarray*}
\E_{S \sim \cQ_\pi}[q(S_i) | S_{< i} = \boldx_{<i}] &=&  \sum_x q(x)\cdot \Pr_{S \sim \cP^n}[S_i = x | \pirv=\pi, S_{<i} =\boldx_{< i}] \\
&=& \sum_x q(x)\cdot \frac{\Pr_{S \sim \cP^n}[\pirv = \pi | S_i = x, S_{< i} = \boldx_{< i}]\cdot \Pr_{S \sim \cP^n}[S_i = x]}{\Pr[\pirv = \pi | S_{< i} =\boldx_{< i}] } \\
&\leq& e^\epsilon \cdot \sum_x q(x)\cdot \Pr_{S \sim \cP^n}[S_i = x] \\
&=& e^\epsilon q(\cP)
\end{eqnarray*}
where the inequality follows from the definition of $(\epsilon,0)$-differential privacy. Symmetrically, we can show that $\E_{S \sim \cQ_\pi}[q(S_i) | S_{< i} = \boldx_{<i}]\geq e^{-\epsilon} q(\cP)$. Therefore we have that:
 $$e^{-\epsilon}q(\cP) \leq \frac{1}{n} \sum_{i=1}^n  \E[q(S_i) | S_{< i}]\leq e^\epsilon q(\cP).$$ Combining this with Equation \ref{eq:azumawarmup} gives us that for any $\eta > 0$, with probability $1-\eta$ when $S \sim \cQ_\pi$:
 $$q(S) \leq e^\epsilon q(\cP) + \sqrt{\frac{2\ln(2/\eta)}{n}} \ \ \textrm{and} \ \ q(S) \geq e^{-\epsilon} q(\cP) - \sqrt{\frac{2\ln(2/\eta)}{n}}$$
\end{proof}
A transfer theorem follows immediately from lemma \ref{lem:better-posterior}.
\begin{theorem}
Suppose that $M$ is $(\epsilon,0)$-differentially private and $(\alpha,\beta)$-sample accurate. Then for any $\eta > 0$ it is $(\alpha', \beta')$-distributionally accurate for $\alpha' =  \alpha +(e^\epsilon-1) + \sqrt{\frac{2\ln(2/\eta)}{n}}$ and $\beta' = \beta +\eta$.
\end{theorem}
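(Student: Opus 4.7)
The plan is to mirror the two-step architecture of Theorem \ref{thm:gen_transfer} (posterior accuracy plus posterior sensitivity, combined via the triangle inequality), but to replace the Markov-based posterior-accuracy bound of Lemma \ref{lem:posterioraccuracy1} with the direct Chernoff-like posterior concentration that Lemma \ref{lem:better-posterior} provides under pure differential privacy. This is exactly what swaps the Markov-like $\beta/c$ degradation for an additive $\eta$ and delivers the improved transfer theorem.

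For each transcript $\pi$ I would select the worst query index $j^{*}(\pi) = \argmax_j |a_j - q_j(\cP)|$, which depends only on $\pi$ and on the fixed distribution $\cP$. Applying Lemma \ref{lem:better-posterior} to the single linear query $q_{j^{*}(\pi)}$, and writing $\gamma := (e^\epsilon - 1) + \sqrt{2\ln(2/\eta)/n}$, yields $\Pr_{S' \sim \cQ_\pi}[|q_{j^{*}(\pi)}(S') - q_{j^{*}(\pi)}(\cP)| \geq \gamma] \leq \eta$ for every fixed $\pi$. Taking expectation over $\pirv$ then gives
\[
\Pr_{S \sim \cP^n,\, \pirv \sim I(S),\, S' \sim \cQ_\pirv}\bigl[|q_{j^{*}(\pirv)}(S') - q_{j^{*}(\pirv)}(\cP)| \geq \gamma\bigr] \leq \eta,
\]
while separately $(\alpha,\beta)$-sample accuracy together with the Bayesian Resampling Lemma \ref{lem:resamp} give $\Pr[\max_j |a_j - q_j(S')| \geq \alpha] \leq \beta$ under the same joint distribution on $(S, \pirv, S')$.

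A union bound followed by the triangle inequality evaluated at the index $j^{*}(\pirv)$ then delivers $|a_{j^{*}(\pirv)} - q_{j^{*}(\pirv)}(\cP)| < \alpha + \gamma = \alpha'$ with probability at least $1 - \beta - \eta$. Since this event depends only on $\pirv$, its probability under this joint distribution coincides with its probability under the marginal of $\pirv$, which is exactly $(\alpha', \beta + \eta)$-distributional accuracy.

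The only place where care is needed, and the closest thing to an obstacle, is the legitimacy of invoking Lemma \ref{lem:better-posterior} on the transcript-dependent query $q_{j^{*}(\pirv)}$. This goes through cleanly because the lemma is stated for an arbitrary linear query at each fixed $\pi$, and $j^{*}(\pi)$ depends only on $\pi$ (and the fixed $\cP$), not on the draw $S' \sim \cQ_\pi$ whose probability we are bounding; everything else is standard resampling and triangle-inequality bookkeeping of the type already used in Section \ref{sec:general}.
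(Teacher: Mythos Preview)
Your proposal is correct and follows essentially the same approach as the paper: define $j^{*}(\pi)=\argmax_j|a_j-q_j(\cP)|$, invoke Lemma \ref{lem:better-posterior} pointwise on $q_{j^{*}(\pi)}$, combine with sample accuracy via the Bayesian Resampling Lemma, and conclude by the triangle inequality and a union bound. The only cosmetic difference is that you transport the sample-accuracy event into the resampled $(S,\pirv,S')$ world, whereas the paper keeps the triangle inequality in the original $(S,\pirv)$ world and instead transports the posterior-concentration event there via Lemma \ref{lem:resamp}; both routes are equivalent.
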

\begin{proof}
For a given $\pi$, let $j^*(\pi) = \argmax_j |a_j - q_j(\cP)|$. By the triangle inequality we have:
$$|a_{j^*(\pirv)} - q_{j^*(\pirv)} (\cP)| \leq |a_{j^*(\pirv)} - q_{j^*(\pirv)} (S)| + |q_{j^*(\pirv)} (S) - q_{j^*(\pirv)} (\cP)| \le \max_j |a_{j} - q_{j} (S)| + |q_{j^*(\pirv)} (S) - q_{j^*(\pirv)} (\cP)| $$
By the definition of $(\alpha,\beta)$-sample accuracy, we have that with probability $1-\beta$, $\max_j |a_j - q_j(S)| \leq \alpha$.
The Bayesian Resampling Lemma (Lemma \ref{lem:resamp}) gives us that:
\begin{align*}
&\Pr_{S \sim \cP^n, \, \pirv \sim I(S)}\left[|q_{j^*(\pirv)} (S) - q_{j^*(\pirv)} (\cP)| \geq (e^\epsilon-1) + \sqrt{\frac{2\ln(2/\eta)}{n}}\right] \\
&= \Pr_{S \sim \cP^n, \, \pirv \sim I(S), \, S' \sim \cQ_\pirv}\left[|q_{j^*(\pirv)} (S') - q_{j^*(\pirv)} (\cP)| \geq (e^\epsilon-1) + \sqrt{\frac{2\ln(2/\eta)}{n}}\right] \\
&= \E_{S \sim \cP^n, \, \pirv \sim I(S)} \left[ \Pr_{S' \sim \cQ_\pirv}\left[|q_{j^*(\pirv)} (S') - q_{j^*(\pirv)} (\cP)| \geq (e^\epsilon-1) + \sqrt{\frac{2\ln(2/\eta)}{n}}\right] \right] \\
&\le \eta
\end{align*}
Because Lemma \ref{lem:better-posterior} guarantees us that for \emph{every} $\pi$,
$$\Pr_{S' \sim \cQ_\pi}\left[|q_{j^*(\pi)} (S') - q_{j^*(\pi)} (\cP)| \geq (e^\epsilon-1) + \sqrt{\frac{2\ln(2/\eta)}{n}}\right] \leq \eta.$$
The theorem then follows from a union bound.
\end{proof}

\end{document}